\documentclass[smallauthors,authorwidth=0.36\linewidth]{paper}
\date{}

\usepackage{microtype}
\usepackage{graphicx}
\usepackage{subcaption}
\usepackage{booktabs} %

\def\website{%
    \url{https://empowerment-geometry.github.io/scaling-empowerment/}%
}

\usepackage{amsmath,amsfonts,bm}

\newcommand{\figleft}{{\em (Left)}}
\newcommand{\figcenter}{{\em (Center)}}
\newcommand{\figright}{{\em (Right)}}
\newcommand{\figtop}{{\em (Top)}}
\newcommand{\figbottom}{{\em (Bottom)}}

\def\eqref#1{equation~\ref{#1}}

\def\1{\bm{1}}

\DeclareMathAlphabet{\mathsfit}{\encodingdefault}{\sfdefault}{m}{sl}
\SetMathAlphabet{\mathsfit}{bold}{\encodingdefault}{\sfdefault}{bx}{n}

\def\gE{{\mathcal{E}}}

\newcommand{\pie}{\gls{pi_mathrmeff}}
\newcommand{\pip}{\gls{pi_mathrmpot}}
\newcommand{\gEe}{\gls{ge_mathrmeff}}
\newcommand{\gEp}{\gls{ge_mathrmpot}}

\newcommand{\E}{\mathbb{E}}

\DeclareMathOperator*{\argmax}{arg\,max}
\DeclareMathOperator*{\argmin}{arg\,min}

\providecommand{\EE}{\mathbb E}

\providecommand{\cA}{\mathcal A}

\providecommand{\cE}{\mathcal E}

\providecommand{\cL}{\mathcal L}

\providecommand{\cS}{\mathcal S}

\providecommand{\cZ}{\mathcal Z}

\usepackage[glossaries]{preamble}
\usepackage{utils}

\usepackage{placeins}
\usepackage{amsthm}

\providecommand{\dmy}[1]{{\color{gray}#1}}

\definecolor{mypurple}{HTML}{7A4988}

\usepackage{wrapfig}
\usepackage[most]{tcolorbox}
\usepackage{hyperref}
\usepackage[capitalize,nameinlink]{cleveref}
\usepackage[acronym]{glossaries}
\makeatletter
\providecommand{\theHALG@line}{}
\renewcommand{\theHALG@line}{\thealgorithm.\arabic{ALG@line}}
\makeatother

\title{Scaling Empowered Agents}

\author{%
  Catherine Ji \\
  Department of Physics\\
  Princeton University\\
  \texttt{cj7280@princeton.edu} \\
  \And
  Vivek Myers \\
  Department of Computer Science \\
  UC Berkeley\\
  \texttt{vmyers@berkeley.edu} \\
  \AND
  Joshua Lin\textsuperscript{*} \\
  Department of Mathematics\\
  Princeton University \\
  \And
  Alexandra Wingate\textsuperscript{*} \\
  Department of Mathematics\\
  Princeton University \\
  \AND
  Sergey Levine \\
  Department of Computer Science\\
  UC Berkeley \\
  \And
  Benjamin Eysenbach \\
  Department of Computer Science\\
  Princeton University
}

\DeclareRobustCommand{\minoredit}[2]{%
  {\color{yellow!45!black}#1}%
  \if\relax\detokenize{#1}\relax\else
    \if\relax\detokenize{#2}\relax\else
      \ifmmode\;\else\space\fi
    \fi
  \fi
  {\color{green!45!black}#2}}

\providecommand{\vivek}[2][]{{\protect\color{violet}{[\textbf{Vivek\if\relax#1\relax\else(#1)\fi:} #2]}}}
\newcommand{\methodName}{SEE\xspace}

\begin{document}
\maketitle
\begingroup
\renewcommand{\thefootnote}{*}
\footnotetext{Equal contribution.}
\endgroup

\begin{abstract} %
    Empowerment measures the capacity of an agent to actively control its environment.
    In this work, we explain the connection between skill learning and empowerment.
    While skill learning is not empowerment, we introduce a simple scalable recipe for empowerment maximization: empowerment corresponds to the number of executable skills from a state, so maximizing empowerment corresponds to reaching states that are good starting points for many skills.
    Our recipe applies to arbitrary underlying skill learning methods.
    Whereas most prior work on empowerment looks at tabular settings, this recipe enables us to study empowerment at a scale where we can realize theoretically-predicted but empirically-elusive properties of empowerment (e.g., gathering resources, environment modification, survival).
    Experiments show that empowerment maximization unlocks interesting emergent properties, such as building a house in Crafter.\footnote{Website and code at \url{https://empowerment-geometry.github.io/scaling-empowerment/}.}
\end{abstract}

\section{Introduction}
\label{sec:intro}

\textbf{Empowerment}, a measure of agency, has been speculated as a unified mathematical explanation for myriad properties of intelligent agents, ranging from building causal models~\citep{yiu2026empowerment,abel2025plasticity,lidayan2025intrinsicallymotivated},  acquiring resources~\citep{mohamed2015variational}, maintaining homeostasis~\citep{klyubin2008keep,salge2014empowerment}, to explanations of how children explore~\citep{lidayan2025intrinsicallymotivated}.
Prior work argues that empowerment should be especially important in open-ended environments, but does not maximize empowerment from scratch in such settings ~\citep{song2026estimating, lidayan2025intrinsicallymotivated}.
Despite decades of research studying the behavior of empowerment-maximizing agents, there has not yet been empirical demonstrations of algorithmic empowerment maximization in the ``big world''~\citep{javed2024big} -- challenging, large settings where behaviors like survival and resource collection become tough.
It is only in such tasks where we expect to see rich emergent behaviors from empowerment maximization.

Agents should successfully interact with an environment that they cannot fully describe. 
For example, even if a young child cannot fully explain all of the intricacies of a game like Minecraft, the child will still become empowered by defeating enemies and building a house, reaching an empowered state from which they know how to do many things. However, maximizing empowerment is a tricky computational problem. First, empowerment maximization requires solving for empowerment at particular contexts. If the context is a state, this means solving for empowerment in all of reachable state space. Second, solving for the empowerment at a particular context is a challenging problem and requires not only modeling the relationship between actions and outputs (e.g. a transition model), but also finding the actions that maximize control. While tractable using standard algorithms in tabular settings, the combination of these difficulties means canonical methods like Blahut-Arimoto cannot exactly solve this computational problem in high-dimensional settings. In the last decade, methods like \gls{vim}~\citep{mohamed2015variational}, \gls{variationalempowerment}~\citep{karl2017unsupervised}, and \gls{latentgce}~\citep{zhao2021efficient} aimed to address this computational problem using deep learning, and were the first to demonstrate learning-based empowerment maximization in simple tabular and continuous settings. However, there has not yet been a stable empowerment-maximization method that builds on the foundation of these works and scales to large continuous or discrete settings. This gap is important to resolve, given that large settings are precisely the right environments for empowerment maximization -- where an agent must execute intelligent behavior despite having a limited view of a complex world.

In this work, we scale empowerment maximization to a regime where theorized intelligent behaviors can emerge.
Our starting point is the observation that empowerment maximization is closely related to \gls{usd} methods.
The goal of empowerment maximization is to reach states from which it is easy to reach a diverse set of states.
Relatedly, USD methods aim to learn a diverse set of different ``skills''.
However, unlike prior empowerment-maximization methods that have struggled to scale, modern skill discovery methods can learn interpretable, diverse behaviors in continuous, high-dimensional tasks.
As we will show, a particular class of USD methods called \gls{misl} measures the point-wise empowerment: at any particular state, the number of different skills an agent can execute from that state lower bounds its empowerment.
In other words, an agent is empowered at states where it has many options, or skills.
Thus, while skill learning methods do not directly learn policies that maximize empowerment, we can use scalable skill learning methods to estimate the empowerment of a particular state; we can then train another policy that ascends the empowerment landscape to get to states where it has many options.

Our work has two key contributions. \textbf{First}, we provide a general recipe to convert scalable skill-learning methods into empowerment maximization methods and show our methods scale when baselines cannot. \textbf{Second}, we demonstrate our method in challenging, high-dimensional settings where empowerment-maximizing behavior becomes nontrivial, and analyze the behavior of empowered agents including resource gathering, survival, and environment modification. For example, early in training in Crafter, empowerment-maximizing agents learn to build houses at night when enemies appear and make and use tools. \textbf{Our work shows that scalable empowerment maximization is possible, demonstrating interesting emergent behaviors in complex, open-ended setting.}

\section{Related Work}
\label{sec:related}

\paragraph{Intrinsic Motivation Exploration Methods.}  
Intrinsic motivation enables exploration without external
information about the \gls{mdp}~\citep{gruaz2025merits}. Prior intrinsic motivation methods reward curiosity~\citep{pathak2017curiositydriven}, novelty~\citep{bellemare2016unifying}, information gain~\citep{houthooft2016vime}, and prediction error~\citep{burda2019exploration}. 
Empowerment instead rewards agency and control~\citep{mohamed2015variational}, a motivation thought to explain human behavior and desired in complex, open-ended settings ~\citep{gopnik2024empowerment,yiu2026empowerment,dai2025understand,lidayan2025intrinsicallymotivated, yang2026assisting}. Prior works predicts that empowerment maximization should generate behaviors such as (1) environment modification~\citep{salge2014changing}, (2) resource gathering~\citep{mohamed2015variational}, and (3) survival. While these emergent behaviors have been previously demonstrated in simple experiments, we present scalable empirical evidence in large, open-ended settings.

\paragraph{Potential and Effective Empowerment.}
As originally introduced, empowerment quantifies the influence an agent's open-loop actions can exert on future states~\citep{klyubin2005empowerment,salge2014empowerment}. Two complementary forms of empowerment are useful~\citep{capdepuy2011informational, ji2026geometry}. \emph{Potential empowerment} measures the channel capacity between a sequence of actions and future observations conditioned on a starting state, measuring the \emph{maximal possible} control from a particular context. In contrast, \emph{effective empowerment} measures how many bits a particular distribution over open-loop controls communicates to the future. Potential empowerment is a scalar property of the context, while effective empowerment is also a function of the controls.

\paragraph{Prior Deep Learning Methods for Estimating and Maximizing Potential Empowerment.}
\gls{vim} maximizes a variational lower bound on open-loop empowerment by learning a decoder that infers action sequences from policy outcomes \citep{mohamed2015variational}. Later empowerment maximization works like \gls{latentgce} alternatively learn latent forward models to estimate the same empowerment, but relies on sufficient coverage over actions sequences in high-empowerment regions ~\citep{zhao2021efficient}. This requirement becomes increasingly demanding in higher-dimensional tasks.
In contrast, we study \emph{closed-loop} empowerment \citep{gregor2017variational, tiomkin2025process}, which is the channel capacity of an information channel from a collection of closed-loop policies to outcomes. Thus, a state is empowering if an agent can execute a diverse repertoire of reactive policies from that state. 
This closed-loop formulation means empowerment measures what a standard \gls{rl} agent can actually do and naturally connects with unsupervised skill-discovery methods.

\paragraph{Unsupervised Skill Discovery.}
\gls{misl} methods aim to learn diverse policies that maximize a \gls{mi} between skill latents $z \in \cZ$ and the outcomes of latent-conditioned policies (i.e. $\pi(a \mid s,z)$).
Modern \gls{misl} methods learn policies that optimize approximations to \gls{mi} objectives over one-step transitions $I(Z;S,S')$ (\citep{zheng2025misl, park2024metra, park2026learning}, to second order for METRA's representation objective), long-horizon transitions $I(Z;S_\tau \mid S_0)$ for termination time $\tau$~\citep{gregor2017variational}, occupancies $I(Z;S)$, \citep{eysenbach2019diversity} and the conditional \gls{mi} $I(Z;S' \mid S)$ \citep{sharma2020dynamicsaware}. Typically, these methods learn a classifier that identifies skills from outcomes and use this classifier as a reward, so that the expected value function is a variational lower bound on the \gls{mi}. These methods scale to high-dimensional locomotion and pixel-based tasks.

\paragraph{Skill Learning and Effective Empowerment.}

The \gls{mi} maximized in \gls{misl} methods are related to closed-loop effective empowerment. Skills are \emph{not} potential empowerment-maximizing policies, but they can provide estimates of the potential empowerment.
Furthermore, while recent works have called attention to this relationship between skill-learning and empowerment~\citep{levy2025representation}, to our knowledge, there has not yet been work that exploits this connection to do scalable potential empowerment maximization. 
We close this gap by adapting modern \gls{misl} methods to maximize potential empowerment in large, open-ended settings.

\section{Defining Empowerment}

\label{sec:prelims}

Our work focuses on a controlled Markov process (i.e., an \gls{mdp} without a reward function) with states
$s \in \cS$, actions $a \in \cA$, and environment dynamics
$p(s' \mid s,a)$.
Our goal is to learn a \emph{potential empowerment} policy $\pip$ that visits states with high potential empowerment $\gEp(s)$:
\begin{equation}
        \pip
        \in
        \argmax_{\pi \in \Pi}
        \E_{\pi}\left[
            \sum_t \gamma^t \gEp(s_t)
        \right].
    \label{eq:pot_goal}
\end{equation}

Empowerment $\gEp(s_0)$ is typically~\citep{salge2014empowerment} defined as the maximal conditional \gls{mi} between behavior $Z$ and future states $S_+$, given an initial state $s_0$: $\gEp(s_0) = \max_{p(A)}I(A; S_+ \mid S_0 = s_0)$. However, prior work differs on what exactly constitutes the behavior $Z$: is it the immediate next action~\citep[Sec.~4.4.2]{salge2014empowerment}, a sequence of actions~\citep{klyubin2005empowerment}, or the index of a closed-loop policy~\citep{gregor2017variational,tiomkin2025process}? Prior work also differs in how precisely the future state $S_+$ is sampled. This paper will adopt an expressive definition of empowerment that measures how behaviors described by closed-loop policies control states \emph{many} steps into the future.

To make the definition of behavior precise, we will consider a $z$ and context-conditioned policy $\pi(a \mid s, z; s_0) \in \Pi_\cZ$, with $s_0$ taken to be the starting context (in our case, a starting state $s_0$) of the behavior. For a specific ``skill vector'' $z$ we will refer to this policy as a ``skill''. Importantly, the context state $s_0$ and the current state $s$ are two different objects -- the context determines the ``start point'' of the skill, while current state tells the agent what to do now.

Let $p(z)$ be a distribution over these skills, which we will assume to be uniform (without loss of generality~\citep{ji2026geometry}).
By introducing $p(z)$ as a distribution over skill vectors, we define a distribution over the space of policies. The term ``skill'' comes from the fact that this distribution over the space of policies may assign zero probability to almost all policies, so the skills are the set of policies with non-zero probability~\citep{eysenbach2022information}.
For each of these skills $\pi(a \mid s, z)$, we define the \gls{dsom} as:
\begin{equation}
    p^\pi(s_+ = s \mid s_0,z)
    \triangleq
    (1-\gamma)\sum_{t=0}^{\infty}
    \gamma^t p^\pi(s_{t+1}=s \mid s_0,z).
    \label{eq:dsom}
\end{equation}
Averaging over all skills gives
\begin{equation}
    p^\pi(s_+ \mid s_0)
    =
    \E_{p(\dmy{z})}
    \left[p^\pi(s_+ \mid s_0,z)\right].
    \label{eq:dsom_marginal}
\end{equation}

We now define the \emph{potential} empowerment using the \gls{dsom} definitions:
\begin{definition}[Potential empowerment]
    Let $p(z)$ be given, and let $S_+ \sim p^\pi(\dmy{s_+}\mid s_0,z)$ be a sample from the \gls{dsom} of skill $\pi(a \mid s, z)$. Then, the potential empowerment of state $s$ is the conditional \gls{mi} between the skill vector $z$ and the future state $S_+$:
    \begin{equation}
        \gEp(s_0)
        \triangleq
        \max_{\pi \in \Pi_\cZ}
        \underbrace{I^\pi(Z;S_+ \mid S_0=s_0)}_{\triangleq \gEe(\pi, s_0)}.
    \end{equation}
    \label{def:pot}
\end{definition}
The potential empowerment can be understood as the maximum number of bits that the skill vector $Z$ can broadcast into the future ($S_+$), starting from state $s_0$. Our goal is to learn a policy $\pip$ that uses this empowerment as a reward, effectively navigating to states where it can broadcast many bits into the future (i.e., exert a strong influence on future outcomes).

Estimating potential empowerment requires solving the inner maximization problem over $\Pi_\cZ$. We will refer to the objective for this inner optimization problem as \emph{effective empowerment}, $\gEe(\pi, s_0)$.
\textbf{We observe that this inner optimization problem is similar to that solved in prior works on \gls{misl}}~\citep{gregor2017variational,zheng2025misl}. Whereas potential empowerment  measures how good a \emph{state} is, effective empowerment measures how good a \emph{set of policies} is from a particular state.
Of course, if this inner optimization problem is only solved approximately, then we obtain a lower bound on the empowerment.

\section{Skills Estimate Empowerment (\methodName): A Scalable Algorithm for Maximizing Empowerment}

\label{sec:misl}
In this section, we introduce a scalable, online algorithm (\methodName) to maximize empowerment (Eq.~\ref{eq:pot_goal}) by exploiting the connection between skill learning and empowerment. 
This connection between learning skills (Sec.~\ref{sec:learn_skills}) and maximizing potential empowerment (Sec.~\ref{sec:prelims}) suggests a two-policy algorithm.
The skill policy $\pie$ estimates a lower bound $\widehat{\gEp}(s)$ on the state-wise potential empowerment, while the potential policy $\pip$ maximizes this estimate as an intrinsic reward.
The effective and potential empowerment policies support one another: the skills $\pie$ measure the controllable futures available from a state, while the potential empowerment policy $\pip$ moves the agent toward states from which the agent can exert control. 

\subsection{The Inner Optimization: Learning Skills}
\label{sec:learn_skills}

We modify prior \gls{misl} methods~\citep{zheng2025misl} to maximize effective empowerment, which lower bounds potential empowerment by \cref{def:pot}. Recall that the expectation of a classifier that identifies skills from transitions over the course of the trajectory is a variational lower bound on the \gls{mi}. This mathematical observation implies that setting the reward to the skill classifier gives a value function that, in expectation, is a lower bound on the \gls{mi}. \methodName builds on \gls{csf}~\citep{zheng2025misl}, a recent state-of-the-art skill-learning method, to approximately lower bound the effective empowerment $\gEe(\pi, s_0) = I^\pi(Z; S_+ \mid S_0 = s_0)$ for a fixed starting state $s_0$. \gls{csf} estimates the one-step \gls{mi} $I(Z;S,S')$ from a classifier that identifies continuous skills from one-step transitions $s \rightarrow s'$. To capture long-horizon dependence between skills and future states, we instead train the classifier on transitions separated by a geometric random variable lengths of time $\Delta \sim \mathrm{Geom}(1-\gamma)$:
\begin{equation}
    f_\phi(s_0,s_+,z)
    - \log \E_{\tilde z\sim p(z)}\!\left[e^{f_\phi(s_0,s_+,\tilde z)}\right]
    \approx \log \frac{p^\pi(z \mid s_0, s_+)}{p(z)}.
    \label{eq:bilinear}
\end{equation}
Following representation parameterizations in \gls{csf}~\citep{zheng2025misl}, \gls{metra}~\citep{park2024metra}, and GenDa~\citep{park2026learning}, \methodName parameterizes the classifier logits $f_\phi$ with a bilinear form of the learned representations $\phi$.
This parameterization means that skills are only distinguishable if learned representations linearly separate over the course of skill execution. Prior work find that this parameterization is important to generate sustained, space-filling behaviors~\citep{park2024metra}. \methodName utilizes \gls{infonce}~\citep{oord2018representation} to estimate the classifier. Additional discussion on this parameterization and its implications are in \cref{app:effective_lb}. 

Accumulating this classifier over the trajectory gives, in expectation, a variational lower bound on the \gls{mi}.
For a skill and start-state conditioned policy $\pi(a\mid s, z; s_0)$, define the effective empowerment Q- and value function as
\begin{align}
Q_{\mathrm{eff}}(s,a; s_0, z)
    &\triangleq
    (1-\gamma)\,
    \E_{\pi(a\mid s, z; s_0)}\left[
    \sum_{k=0}^{\infty}\gamma^k
    \log \frac{q(z \mid s_0, s_{t+k+1})}{p(z)}
    \middle| s_t=s,\; a_t=a
    \right] \label{eq:eff_q} \\
V_{\mathrm{eff}}(s; s_0, z) 
    &\triangleq 
    \E_{\pi(a \mid s,z; s_0)}\left[
    Q_{\mathrm{eff}}(s,a; s_0, z)
    \right].
    \label{eq:eff_v}
\end{align}
Then, in expectation over a uniform skill source $p(z)$ initialized at the start-state $s_0$, we have that
\begin{align}
\EE_{p(z)} [V_{\mathrm{eff}}(s_0; s_0, z)] &\leq \gEe(\pi, s_0),
\label{eq:eff_bound}
\end{align}
so the expected value function lower bounds the effective empowerment. Potential empowerment then uses a Monte-Carlo estimate of $\EE_{p(z)} [V_{\mathrm{eff}}(s_0; s_0, z)] \approx \frac{1}{K} \sum_{k=1}^{K} V_{\mathrm{eff}}(s_0; s_0, z_k)$, with $z_1,\ldots,z_K\overset{\mathrm{iid}}{\sim}p(z)$, to get an estimate of the pointwise potential empowerment:
\begin{equation}
\hat{\gEe}(\pi, s_0) = \EE_{p(z)} [V_{\mathrm{eff}}(s_0; s_0, z)] \leq I(Z;S_+ \mid S_0 = s_0) \leq \gEp(s_0).
\label{eq:pot_reward}
\end{equation}

\subsection{Practical Considerations}
\label{sec:practical}

\paragraph{Conditional and Joint \gls{mi} Classifiers.}
While empowerment is theoretically a conditional \gls{mi}, in practice, we found that 
training policies with joint \gls{mi} classifiers that identify skills from a current state $s_t$ and future state $s_{+}$ \emph{from that state} led to stronger
performance in continuous benchmarks. Otherwise, the classifier aggressively overfit to the context $s_0$. The skill reward still uses the true skill's starting state $s_0$. We include implementations of \methodName
using the joint \gls{mi} classifier and a standard CMI classifier. By default, \methodName uses this joint classifier in continuous settings and the conditional classifier
in Craftax-Classic (\cref{app:data_collection}).

\paragraph{Limiting empowerment context.} 
As noted in \citep{klyubin2005empowerment, abel2025plasticity}, empowerment depends on an agent's particular perception. Otherwise, an agent may obtain high
empowerment by perturbing controllable but meaningless variables.
We therefore run experiments that restrict variables over which empowerment is defined. This restriction determines
which changes in the state are salient to the skill, without restricting
the information available to the policy for control. Thus, the classifier may
receive different degrees of freedom from the full-observation policy and critic.

\paragraph{Two-stage rollout.}
For large state spaces, we cannot simply learn skills for every starting state $s_0$
and then maximize the potential empowerment over the constructed field.
Instead, we consider a two-stage rollout scheme:
within each episode of data collection, we first roll out the potential empowerment policy and then roll out one of the effective empowerment policies (i.e., sample a skill $z$ and execute
$\pie(a \mid s,z;s_0)$) (\cref{app:data_collection,app:pseudocode}).
We can interpret skills as local probes of empowerment, where the potential empowerment policy ascends the empowerment landscape estimated by these probes.

\begin{tcolorbox}[
    colback=green!10,      %
    colframe=green!40!black, %
    rounded corners,
    boxrule=0.5pt,
    left=6pt, right=6pt, top=6pt, bottom=6pt
]

\paragraph{Method Summary.}

We adapt \gls{csf}, a scalable \gls{misl} method, to learn skills, which gives us (a lower bound on) the potential empowerment of states $\gE(s)$.
Then, we train a second potential empowement policy using the estimated potential empowerment as a reward. This policy
navigates to states \emph{from which} it can exert control.
\end{tcolorbox}

\section{Experiments}
\label{sec:experiments}
We (1) show scalability of the method, (2) carefully ablate and analyze all of the salient design decisions (and provide open-source implementations for future development), and (3) interpret the resulting emergent behaviors. 
All training setups and hyperparameters are in \cref{app:cont_details,app:disc_details}.

\begin{figure}[!tbp]
    \centering
    \begin{subfigure}[t]{0.49\linewidth}
        \centering
        \includegraphics[width=0.4\linewidth]{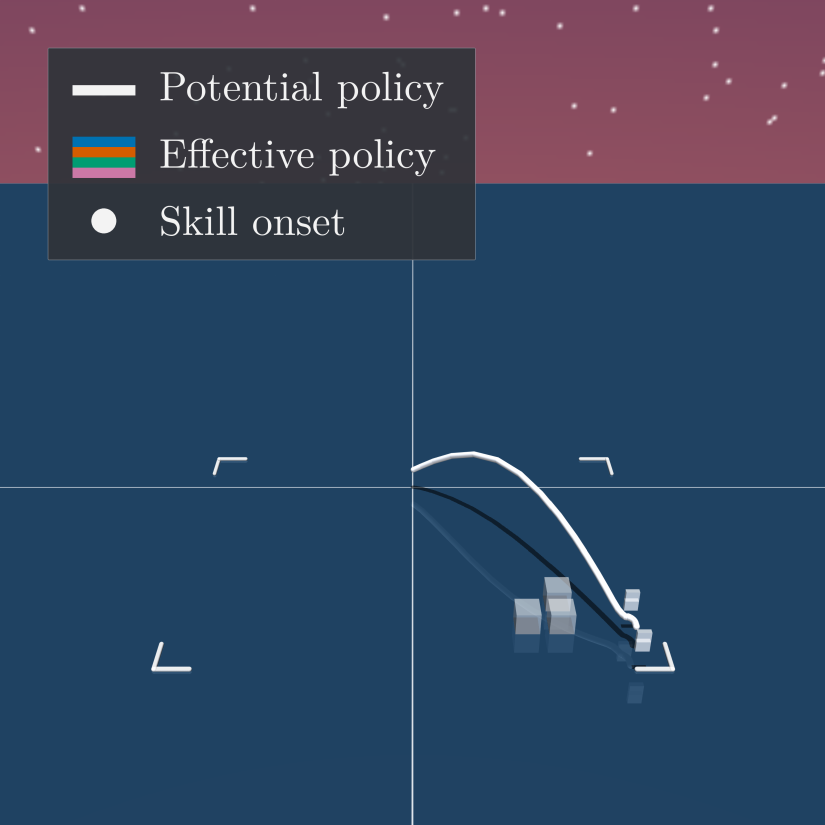}\hspace{0.02\linewidth}
        \includegraphics[width=0.4\linewidth]{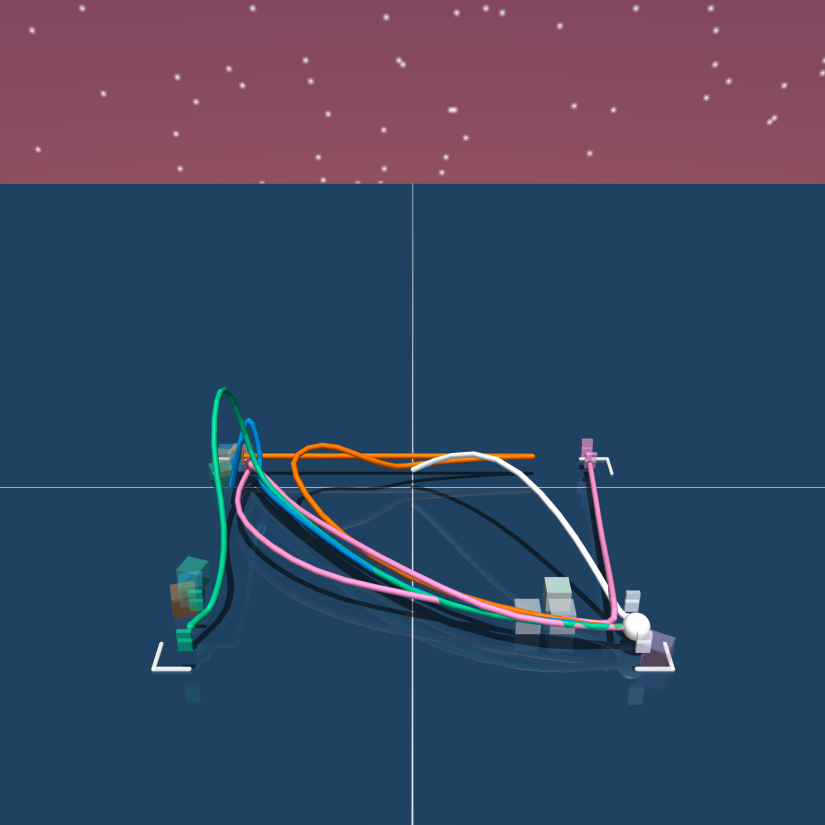}
        \caption{\textbf{BuilderBench cubes.}~\citep{ghugare2026builderbench}}
        \label{fig:highdof_cubes_sub}
    \end{subfigure}
    \hfill
    \begin{subfigure}[t]{0.49\linewidth}
        \centering
        \includegraphics[width=0.4\linewidth]{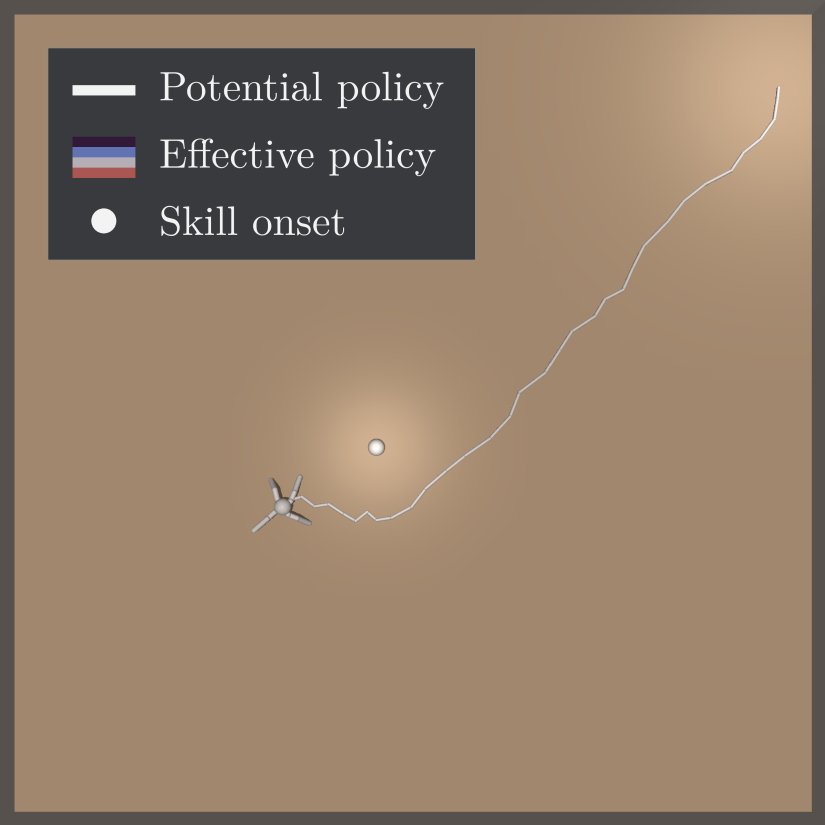}\hspace{0.02\linewidth}
        \includegraphics[width=0.4\linewidth]{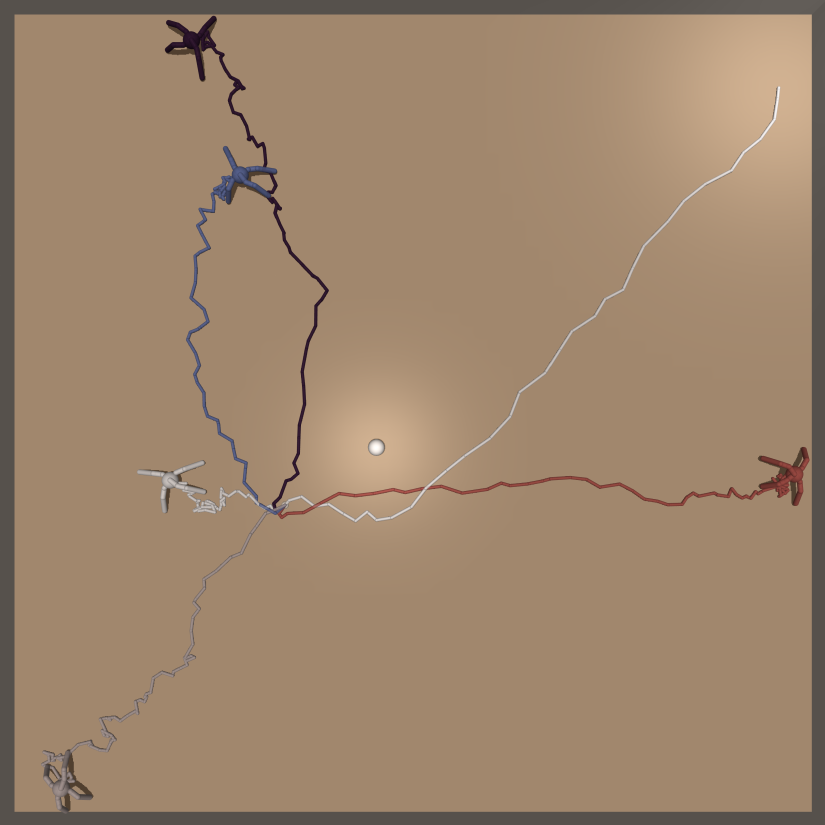}
        \caption{\textbf{Ant locomotion.}~\citep{park2025ogbench}}
        \label{fig:highdof_ant}
    \end{subfigure}
    \par\medskip
    \begin{subfigure}[t]{0.49\linewidth}
        \centering
        \includegraphics[width=0.4\linewidth]{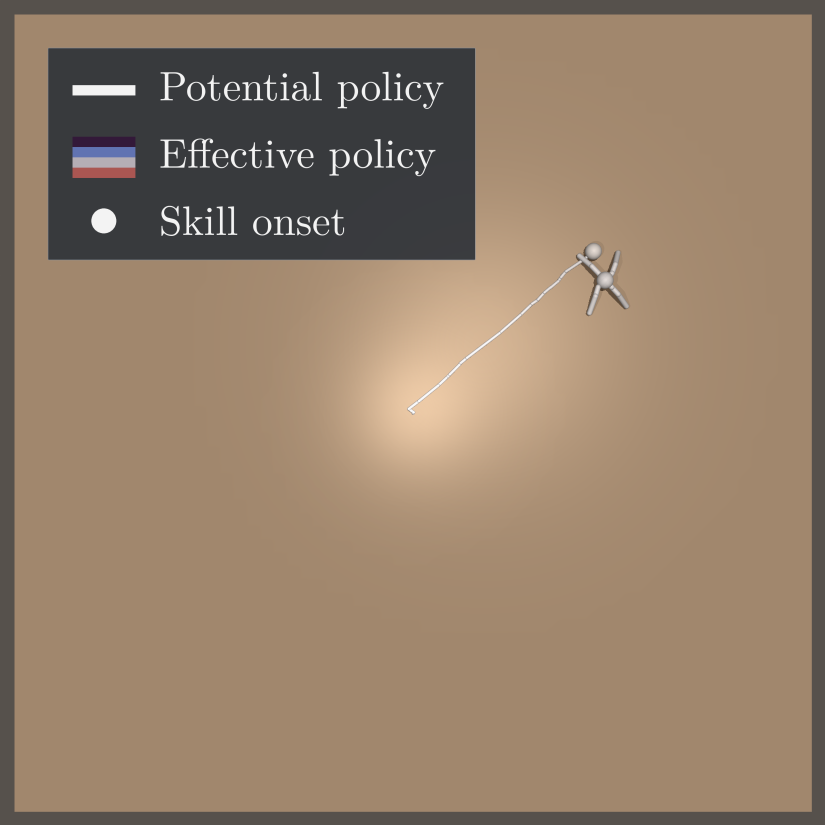}\hspace{0.02\linewidth}
        \includegraphics[width=0.4\linewidth]{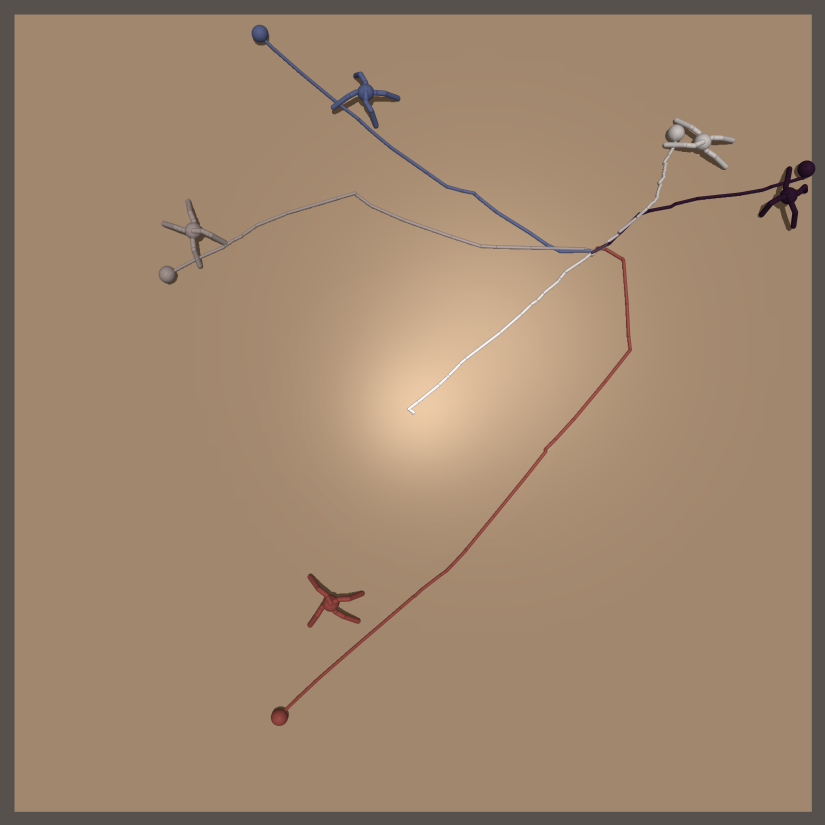}
        \caption{\textbf{Ant soccer.}~\citep{park2025ogbench}}
        \label{fig:highdof_antsoccer}
    \end{subfigure}
    \hfill
    \begin{subfigure}[t]{0.49\linewidth}
        \centering
        \includegraphics[width=0.4\linewidth]{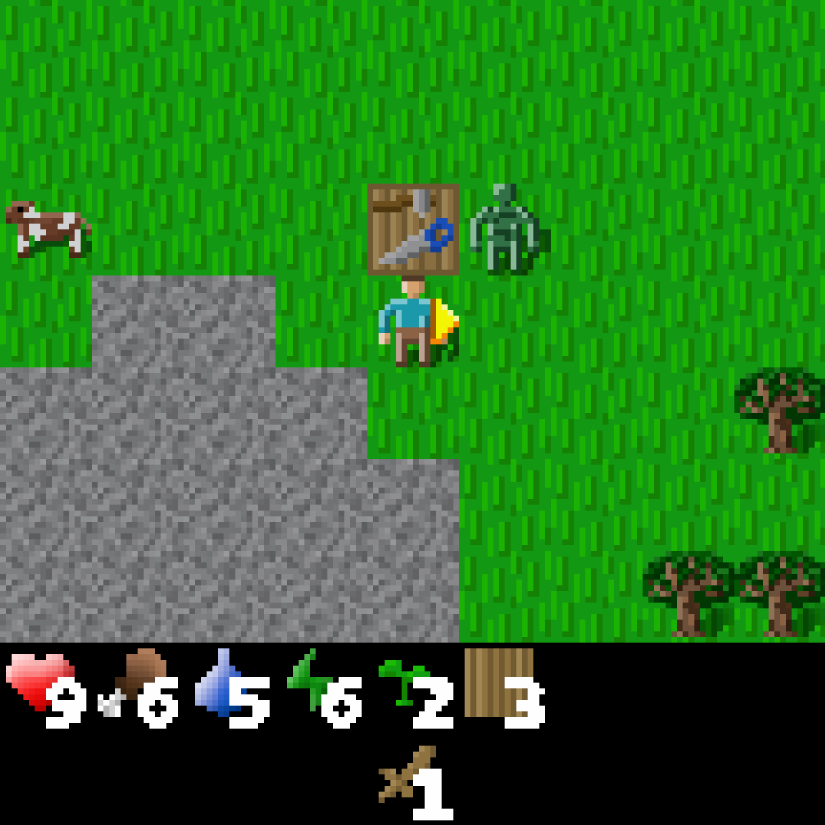}
        \caption{\textbf{Craftax-Classic.}~\citep{matthews2024craftax}}
        \label{fig:highdof_craftax}
    \end{subfigure}
    \caption{\textbf{Empowerment-maximizing behavior across environments.} (a) In the BuilderBench task, a gripper interacts with blocks in an open-ended setting. The potential empowerment policy is visualized with the white rollout, navigating to the blocks and staying there. Then, the effective empowerment policies, or skills, knock the blocks in different directions (colorful lines and blocks). (b) In the first version of the AntSoccer task, an agent can be empowered without interacting with the ball. The potential empowerment policy (white) navigates towards the central area of the arena, while the skills branch out from this point. (c) In the second version of the AntSoccer task, an agent can only be empowered if it controls the ball. The trained potential empowerment policy reaches the soccer ball, while skills kick the soccer ball in different directions. (d) In Craftax-Classic, the potential empowerment agent gathers materials, builds a crafting table, then builds tools using gathered materials.}
    \label{fig:highdof_combined}
\end{figure}

\paragraph{Environments.} To test whether the method works in large, open-ended settings, we implement our method in Craftax-Classic~\citep{matthews2024craftax} (JAX-accelerated Crafter, \cref{fig:highdof_craftax}). Crafter is a large and partially observed setting with a 1345-dimensional symbolic observation and 17 discrete actions consisting of crafting, resource collection, fighting mobs, and eating/drinking. In this environment, we define the empowerment degrees of freedom to be over the agent inventory. The reason for this choice is that we do \emph{not} wish for an agent to be empowered over features like e.g. health, hunger, and thirst, which would imply having a lot of control over these precise status values (for instance, a skill that \emph{decreases} health).

To test whether the method works in continuous settings, we chose two high-DoF benchmarks that capture locomotion and manipulation. The AntSoccer environment consists of an ant and a ball in a closed arena, where an ant can freely kick the ball~\citep{park2025ogbench}. Within the AntSoccer environment, we consider the setting where empowerment is defined over all $42$ degrees-of-freedom (e.g. joint angles, xy position, ball position, ``Ant'') and strictly the ball position (``AntSoccer'').
BuilderBench~\citep{ghugare2026builderbench} consists of a gripper with cubes. We consider an environment with 3 cubes in the corner of a bounded play area (\cref{fig:highdof_cubes_sub}, white corners denote the boxed play area), where empowerment is defined over the $9$ degrees-of-freedom that denote the cube positions.

\paragraph{Baselines.} We consider two  baselines that learn potential empowerment policies: \gls{latentgce}~\citep{zhao2021efficient} and \gls{vim}~\citep{mohamed2015variational}. Both of these methods learn policies that navigate to high open-loop empowerment regions (\cref{app:baselines}). We omit standard \gls{misl} works because they optimize effective empowerment policies from a fixed initial state distribution and do not learn a potential empowerment policy, though these methods bear similarity to our skill-switching ablation.

\paragraph{Ablations.} We ablate key design decisions in \methodName. While our method builds on \gls{misl}, there are fundamental differences between empowerment maximization and skill learning. Skill learning trains effective empowerment policies \emph{usually from a fixed or slightly noised starting state}.
Potential empowerment maximization is \emph{not} as simple as running \gls{misl} out-of-the-box and deals with the tougher problem of learning skills originating over many regions of state space. Design decisions such as skill-switching and the choice of the underlying \gls{misl} parameterization are critical to learn behaviorally interesting policies. We do the following ablations:
\begin{itemize}
    \item \textbf{skill-switching} for learning the potential empowerment policy,
    \item \textbf{joint versus conditional \gls{mi}}, and
    \item the \textbf{bilinear difference-in-representation encoder versus a more general encoder}.
\end{itemize}

\begin{wrapfigure}[17]{R}{0.4\textwidth}
    \centering
\includegraphics[width=\linewidth]{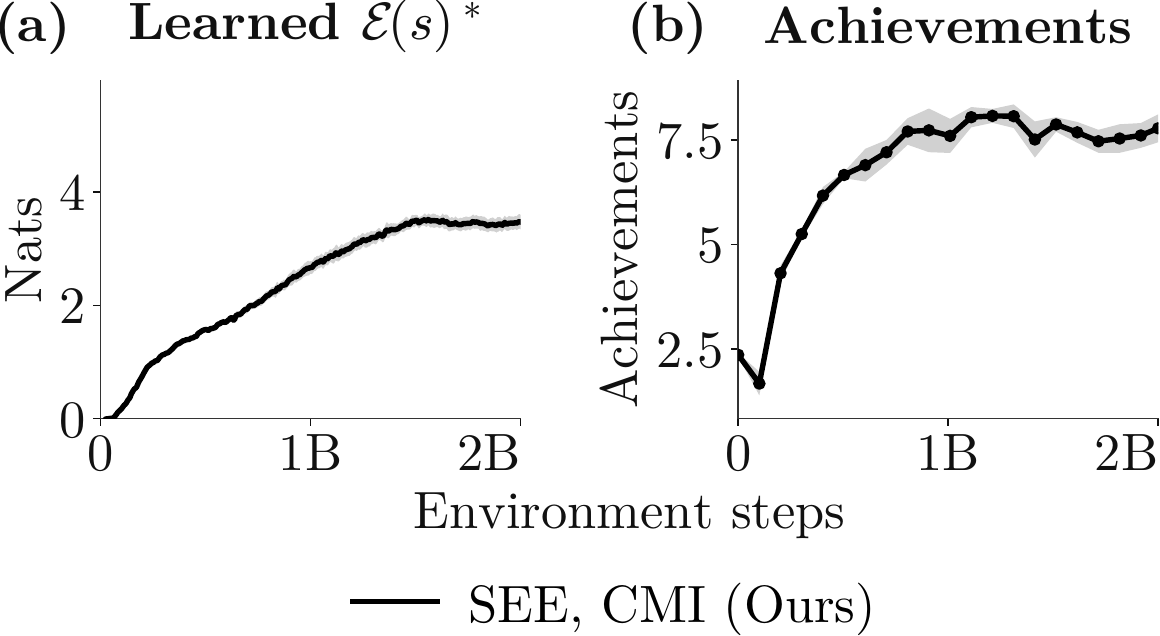}
    \caption{\textbf{The potential empowerment estimate increases with control in Crafter.} (a) The learned empowerment estimate increases consistently during training. $5$ seeds, p.m. standard error after per-seed EMA ($\alpha = 0.05$). (b) Simultaneously, the agent intrinsically learns to complete achievements in the environment \emph{without access to external rewards}. $5$ seeds, evaluated over $128$ random environments, p.m. standard error.}\label{fig:training_curves_craftax}
\end{wrapfigure}
\paragraph{Metrics.} While \gls{mi} is an important signal for training, achieving high \gls{mi} does not guarantee meaningful empowerment maximization. In a continuous setting without constraints or significant noise, prior work has shown that trivial behaviors can realize potentially unbounded \gls{mi} \citep{ji2026geometry}. Thus, we additionally track environment-specific metrics to judge the quality of the learned policies. In Ant, we measure the distance of the ant to the center of the arena. In AntSoccer, we measure the distance of the ant to the soccer ball. In BuilderBench, we measure the distance of the gripper to the cubes. In Crafter, we measure the achievement count. These external metrics measure the quality of learned behaviors \emph{independent} of the \gls{mi} quantities themselves.

\subsection{Scaling and Baselines}

\paragraph{Discrete Setting: Craftax-Classic.}
\methodName scales to high-dimensional, discrete settings.
In Craftax-Classic (\cref{fig:highdof_craftax}), the agent consistently unlocks $7-8$ out of 22 total achievements per rollout from $1 \times 10^{9}$ environment steps.
These metrics exceed the best previously-reported success rates of intrinsic exploration methods in Crafter, which obtains $\sim5.5$ achievements with the same training ~\citep{mohamed2026temporal}.
The estimated empowerment also rises steadily ~(\cref{fig:training_curves_craftax}), tracking the development of interesting behavior.
Implementation details and hyperparameters for Craftax-Classic are in \cref{app:disc_details}.

\paragraph{Continuous Settings: Ant \& AntSoccer.} \methodName learns meaningful empowerment behavior in continuous settings that tracks increasing average empowerment estimates (\cref{fig:continuous_empowerment}).
If the empowerment is defined over all degrees-of-freedom (``Ant''), the policy $\pip$ learns to navigate to the {\it center region of the arena and stay there} (white, \cref{fig:highdof_ant}) while the policies $\pie$ branch and cover the space (colors, \cref{fig:highdof_ant}), following intuition that empowerment is high when the Ant can \emph{do more} away from the walls of the arena.
The baseline potential empowerment policies, in contrast, do not navigate to the center region and barely move at all. When the empowerment is defined only over the ball xy coordinates (``AntSoccer''), the policy $\pip$ navigates close to the ball (white, \cref{fig:highdof_antsoccer}) while the skill policies $\pie$ kick and dribble the ball in different directions (colors, \cref{fig:highdof_antsoccer}). The baseline potential empowerment policies, in contrast, do not consistently navigate close to the ball. When LatentGCE does happen to navigate close to the ball and kick the ball, the agents erroneously kick the ball to the very corner of the arena, where empowerment should be low.

\paragraph{Continuous Settings: BuilderBench.} 
Finally, in BuilderBench, the potential empowerment policy sneaks behind the blocks, where it has easy access to the blocks, and stays there (\cref{fig:training_curves}). The effective empowerment policies then knock the blocks in different directions (\cref{fig:highdof_cubes_sub}), and the arm learns to get closer to the blocks. In contrast, the baseline methods perturb the blocks, but fail to remain close to the blocks at the end of the potential empowerment rollout.

In summary,
\methodName scales to manipulation, locomotion, and open-ended (e.g. Crafter) environments, whereas prior empowerment-maximization methods struggle.

\subsection{Ablating Design Decisions}
\label{sec:ablations}

\paragraph{Policy switching.} During data collection, \methodName switches from the potential empowerment to the effective empowerment policy. An alternative is to separately train potential empowerment and effective empowerment policies.
Despite learning meaningful skills from the starting state, this ablation leads to inconsistent potential empowerment policies that do not reach the center of the arena in Ant nor consistently reach the blocks in BuilderBench. 
Meanwhile, our base \methodName stably learns potential \emph{and} effective empowerment policies in all of these environments.

\begin{figure}[!tbp]
    \centering
    \includegraphics[width=0.6\linewidth]{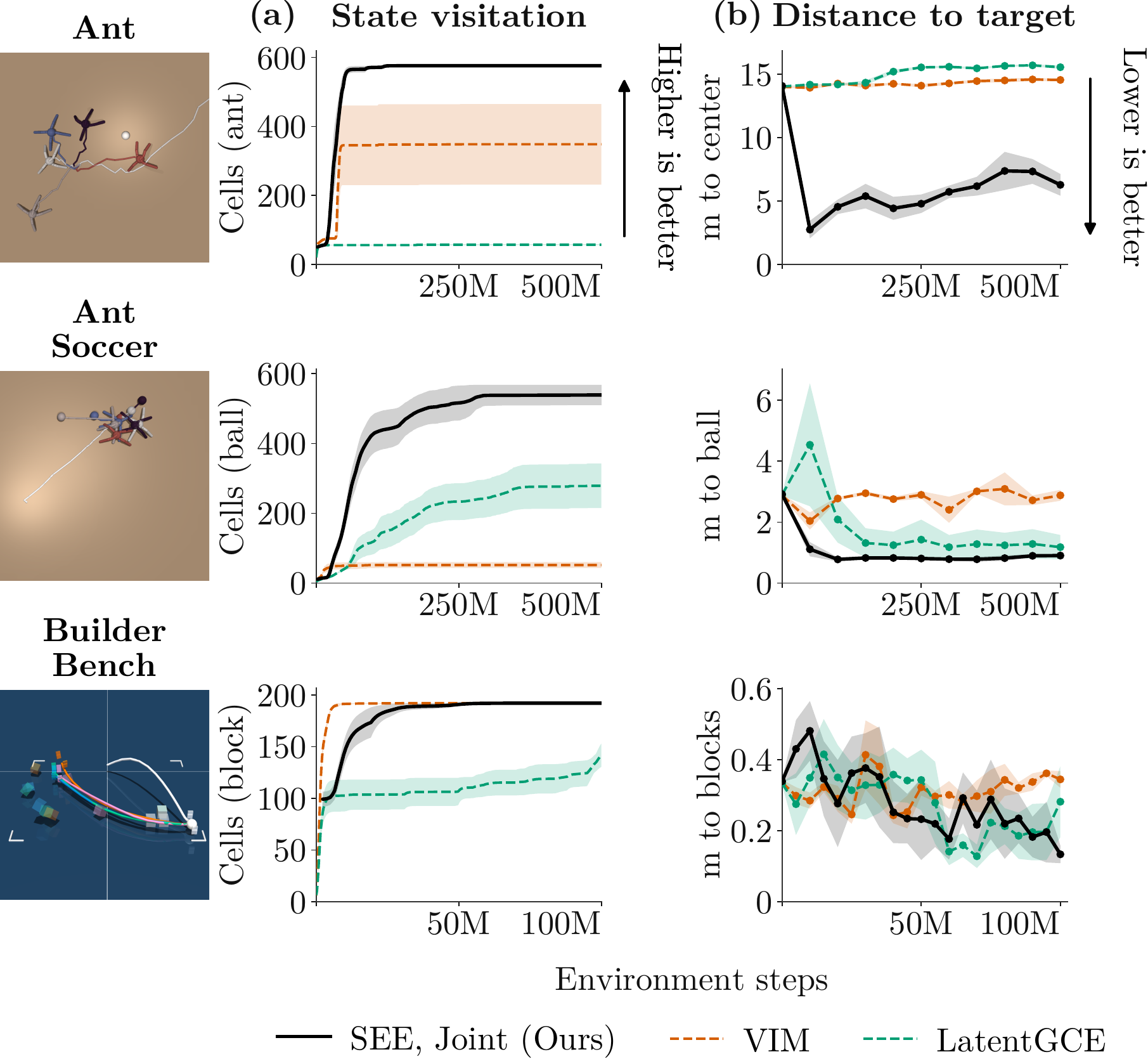}
    \caption{\textbf{Potential empowerment and control increases over training (continuous settings).} We compare \methodName (Black) and baselines VIM (Orange) and LatentGCE (Green). (a) \methodName visits diverse states across all environments, which is necessary to learn correct models and decoders of dynamics and skills. (b) \methodName consistently learns potential empowerment policies that navigate towards central regions of the arena in Ant, to the ball in AntSoccer, and to the blocks in BuilderBench. \gls{vim} is unstable and largely stationary in Ant and AntSoccer. LatentGCE is stationary in Ant. Five seeds for \methodName and LatentGCE and three for VIM, p.m. standard error. Eval metrics averaged over $128$ randomized environments.}
    \label{fig:training_curves}
\end{figure}

\paragraph{Conditional \gls{mi} vs. Joint \gls{mi} Variants.} \citet{klyubin2005all} define empowerment as a conditional \gls{mi} conditioned on a starting context state. However, modern \gls{misl} methods learn variational lower bounds on joint \glspl{mi} (e.g. $I(Z;S,S')$) that do \emph{not} condition on the starting context. We compare using conditional versus joint \gls{mi} parameterizations of empowerment. Further details of how we implement the \acrshort{cmi} version of \methodName is in the Appendix \ref{app:data_collection}.
Policies trained with the true conditional \gls{mi} learn empowering behaviors in AntSoccer and BuilderBench. However, when empowerment is defined over many continuous degrees of freedom like in Ant, the CMI falls into the trap where the decoder and policies overfit to small perturbations, leading to high \gls{mi} values without meaningful behavior. In Crafter, using the CMI leads to the best performance among the ablations and prevents the MI collapse observed when optimizing the joint MI.

\paragraph{Bilinear Difference in Representations vs. General Parameterization.} 

Recent \gls{misl} works such as \citet{park2024metra, zheng2025misl, park2026learning} utilize differences in representations $z^\top (\phi(s') - \phi(s))$ to parameterize classifiers (or similar objects) for skill learning. Replacing this parameterization with a more expressive logit significantly worsens the stability of training and the learned behaviors across environments, as the classifier can closely fit collected data and the policies ``overfit'' to sensitive classifiers.

\subsection{Emergent Behaviors}

\begin{figure}[!tbp]
    \centering
    \includegraphics[width=0.75\linewidth]{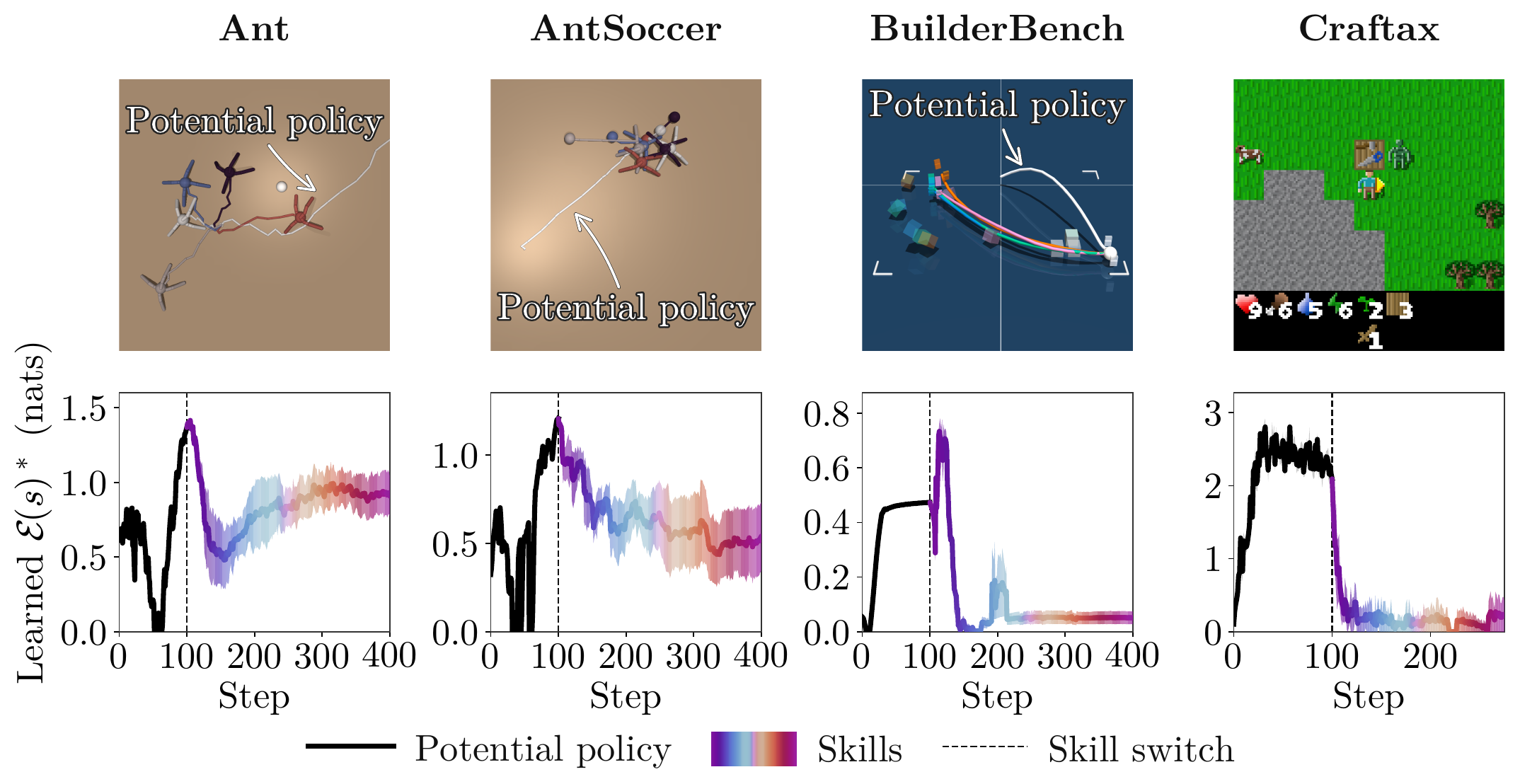}
    \caption{\textbf{Potential empowerment policies accrue empowerment, while skills spend it.} (Top row) Over the course of the trajectory, the potential empowerment policies (white lines) ``gather'' empowerment, reaching empowered states. Correspondingly, the empowerment estimates increase over the course of the trajectory. (Bottom row) At the gray skill switch mark, executing a skill (colorful lines) from the reached state ``spends'' the empowerment, reflecting the intuition that potential empowerment policies collect power and effective empowerment spends power. Traces use one seed and four skill branches per environment.}
    \label{fig:potential_trace_row}
\end{figure}

\paragraph{Accruing and spending power.} Across all environments, the learned switching policies accrue empowerment while executing the potential empowerment policy (\cref{fig:potential_trace_row}, white traces). The policy gathers empowerment by walking to the center of the arena in Ant, getting close to the soccer ball in AntSoccer, getting close to the blocks in BuilderBench, and gathering resources and tools in Crafter. Then, the policies spend the empowerment when executing skills (\cref{fig:potential_trace_row}, colorful traces), spreading out in the arena (Ant), kicking the ball (AntSoccer), and moving the blocks (BuilderBench).

\paragraph{Gathering resources and tool use.} Resource-gathering and tool use enable empowerment-maximizing agents to exert control over the environment. In the Crafter environment, the trained potential empowerment agent learns to acquire many diverse materials (e.g. wood, saplings), such that its potential empowerment increases over the course of the episode. From that potentially-empowered state, the agent executes diverse skills such as building houses, mining, and using the gathered resources to craft new tools (\cref{fig:emergent_behaviors}).

\paragraph{Modifying the environment.} Empowered agents learn to perturb their environment to maximize future control. This phenomena has been documented in prior works~\citep{salge2014changing} in simple tabular settings. In the scaled up Crafter setting, we observe more complex ``building'' behaviors, such as learning to construct a house when defending against enemies (\cref{fig:emergent_behaviors}).

\begin{figure}[htb]
    \centering
    \includegraphics[width=0.6\linewidth]{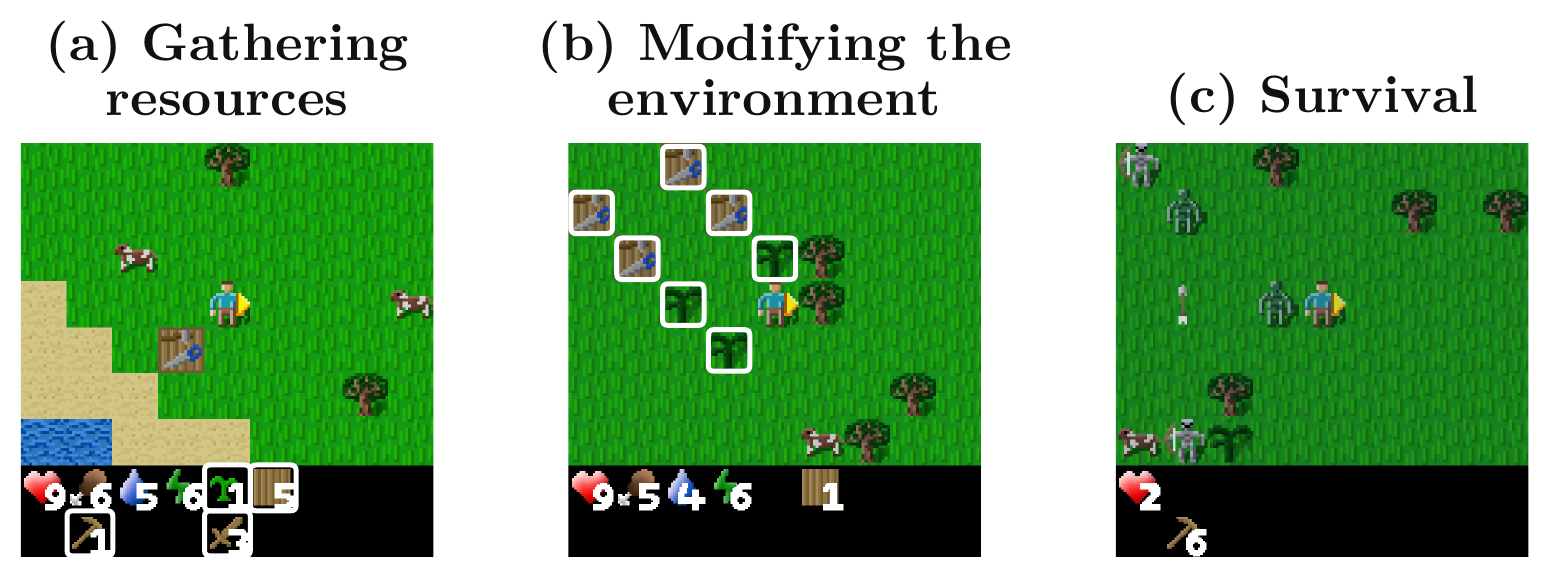}
    \caption{\textbf{Empowerment maximization has interesting emergent behaviors.} (a) In Crafter, the empowered agent quickly accumulates many resources and uses a tool (crafting table, wooden icon with saw) to craft other tools in the game. Resources boxed in white. (b) Early in training, the agent learns to build a ``house'' consisting of plants and crafting tables. Built tiles boxed in white. (c) The agent also naturally learns to survive, avoiding and killing enemies like zombies and skeletons. One seed per example; further qualitative examples across seeds in \cref{app:qualitative}.}
    \label{fig:emergent_behaviors}
\end{figure}

\paragraph{Survival.} In death, an agent no longer has any control and, thus, has no empowerment. Therefore, agents maximizing empowerment should learn survival skills. In Crafter, we see that empowerment-maximizing agents naturally learn survival skills like killing and dodging enemies. Importantly, the empowerment-maximizing agent learns behavior \emph{distinct} from simply training a survival reward (e.g. $r=1$ for every step, $r=0$ upon termination), which does not learn more complex crafting behaviors and \emph{only} prioritizes survival skills.

\section{Discussion}
\label{sec:disc}

Despite the conceptual appeal of empowerment as a tool for understanding natural agents and building artificial agents, its seeming computational intractability has caused prior work to resort to approximations that do not learn one policy that ``keeps its options open''~\citep{klyubin2008keep}.
In providing a conceptual link between skill learning methods and empowerment, we provide a method that maximizes potential and effective empowerment, two policies that measure controllability in distinct ways.
We present key practical considerations to learn empowerment maximization policies in high-DoF settings. 
Future work should investigate simplified parameterizations of empowerment maximization that can resolve practical considerations detailed within our method, at scale.
Furthermore, our work highlights the sensitivity of empowerment maximization to representations of the world, with open questions on the interplay between representations of the world and empowerment maximization~\citep{dai2026worth}.

\paragraph{Acknowledgments.} We would like to thank David Abel, Elom Amematsro, Mahsa Bastankhah, Caleb Chuck, Max Kleiman-Weiner, Andrew Levy, and anonymous reviewers for their insight and helpful feedback. This material is based upon work supported by Google and the National Science Foundation Graduate Research Fellowship Program under Grant No. DGE-2444107 and Award No. 2441665. Any opinions, findings, and conclusions or recommendations expressed in this material are those of the author(s) and do not necessarily reflect the views of the National Science Foundation.

\subsection*{AI use statement}
We used generative AI tools to assist with implementing experiment code, preparing and revising figures, minor writing edits, assembling appendix details, and manuscript formatting. We did not use AI for the methods development, theoretical results, or any major writing. The authors take responsibility for the final content of this work.

\subsection*{Reproducibility statement}
We provide algorithm pseudocode and implementation details in the appendix. 
Code, videos, and additional figures are available at \website.

\appendix
\newpage
\newglossaryentry{gep}{%
    sort={gep},
    name={\ensuremath{\gEp}},
    description={%
        The potential empowerment at state $s$ and iteration $n$, defined as the channel capacity between skills and future states from that state.
    }
}

\section{Algorithm and Implementation Details}
\label{app:alg_details}

Below, we include key algorithm and implementation details. While we describe the Joint MI classifier form of the \methodName for simplicity, we detail the Conditional MI variant of \methodName later in \cref{app:data_collection}. When the skill is independent of the starting state (e.g. drawn from the same prior distribution at a starting state), the joint ($I(Z;S_+, S_0)$) and conditional \gls{mi} ($I(Z;S_+ \mid S_0)$) are equivalent, as $p(z \mid s_0) = p(z)$. Practically, our Joint MI classifier breaks this independence assumption by sampling a state \emph{within a trajectory} as a starting point, then sampling a geometric future state from that point. Further discussion is in \cref{app:data_collection}.

\subsection{Training the classifier}
\label{app:effective_lb}

\methodName utilizes \gls{misl} methods to lower bound the pointwise potential empowerment. We take an approach similar to \gls{csf}~\citep{zheng2025misl}, employing noise contrastive estimation to learn the skill classifier, which provides a variational lower-bound on the \gls{mi}.

Without loss of generality, assume that the skills $z$ are drawn from a uniform distribution on a sphere $p(z)=\mathrm{Unif}(\mathbb S^{d-1})$. Let $f_\phi (s,s_+, z)$ denote a parameterized logit over a state $s$, future state $s_+$, and skill $z$, constrained to take the form
\[
f_\phi (s, s_+, z) = \frac{1}{\tau} z^\top (\phi(s_+) - \phi(s))
\]
which we refer to as a bilinear difference-of-representations parameterization, with temperature $\tau=0.1$. This parameterization of the logit appears in recent SOTA \gls{misl} literature and has been shown to train persistent skill behavior that spread over latent space; we refer readers to these references for more information \citep{zheng2025misl,park2024metra,park2026learning}.

Then, given access to a dataset $\mathcal{D}$ of starting states, skills, and outcomes $(s, s_+, z)$, where outcomes $s_+ \sim p^\pi(\cdot\mid s,z)$ are drawn from the \gls{dsom},
InfoNCE trains a classifier to distinguish whether skills are drawn from the joint distribution $p(z, s, s_+)$ or the product of marginals $p(z)p(s, s_+)$. For a batch of size $B$ with one positive pair $(\textcolor{teal}{z, (s, s_+)})$, where $\textcolor{teal}{z} \sim p^\pi(\cdot \mid \textcolor{teal}{s,s_+})$, and $B-1$ negative pairs $(\textcolor{brown}{z}, \textcolor{teal}{(s, s_+)})$, where $\textcolor{brown}{z} \sim p(\dmy{z})$, we have:
\begin{align*}
    \text{InfoNCE Loss: } \quad \cL(\phi) = \frac{1}{B}\sum_{i=1}^{B}\left[\log\sum_{j=1}^{B}\exp f_\phi(s_i,s_{+,i},\textcolor{brown}{z_j}) - f_\phi(s_i,s_{+,i},\textcolor{teal}{z_i})\right].
\end{align*}
At optimum, the logit recovers the true classifier
\begin{align*}
    f_\phi^{*}(s,s_+,z) = \log \frac{p(z\mid s,s_+)}{p(z)} + c_\phi(s,s_+)
\end{align*}
where $c_\phi(s,s_+)$ denotes the log partition function
\begin{align*}
    c_\phi(s,s_+) \triangleq \log \E_{p(z)} \left[\exp\!\left(f_\phi(s,s_+,z)\right)\right].
\end{align*}

The learned logit gives a variational lower bound on the \gls{mi} $I(Z;S, S_+ )$:
\[
\E_{p(z,s,s_+)}\bigl[f_\phi (s, s_+, z) - c_\phi(s,s_+)\bigr] \leq I(Z;S,S_+),
\]
which becomes tight at the optimum \citep{ma2018noise}.

\paragraph{Adding an Information Bottleneck.} 
High \gls{mi} {\it does not directly imply interesting behavior in continuous settings}. Prior works note that without any type of constraint on the information maximization problem, empowerment is ill-defined in continuous settings and can be infinite \citep{salge2014empowerment}.
In our case, a fully-general skill decoder trained over the transitions can give arbitrarily large empowerment estimates, even for skills that barely change the state.

For the inner skill optimization problem, it is critical to apply an information bottleneck from $S_0, S_+ \rightarrow Z$ when fitting the encoder in the continuous setting. For simplicity, we uniformly add Gaussian noise to the state when training the classifier. This noising forces the classifier to identify skills from perturbations in the state beyond this noise scale. Concretely, we train $f_\phi(s+\epsilon,s_++\epsilon_+,z)$, redrawing the noise $\epsilon,\epsilon_+\overset{\mathrm{iid}}\sim\mathcal N(0,\sigma^2I)$ at each encoder update. \methodName does not noise states during policy and value optimization and solely applies the bottleneck during classifier training. Noise scales for each environment are given in \cref{app:experimental_setup}. Craftax-Classic results use no noise.

\subsection[Practical Considertions: Data Collection \& Conditional MI]{Data Collection \& Conditional MI}
\label{app:data_collection}

During data collection, \methodName switches from the potential empowerment at a switch state $s_0$ to the effective empowerment policy. The purpose of this switch is to ensure that the method learns skills precisely at reachable locations. During rollout, we take the skill distribution $p(z \mid s_0) = p(z)$ to be uniform. Then, classifying skill $z$ from the switch state $s_0$ and the reached future states gives, in expectation, a variational lower bound on the empowerment (\cref{app:theory}). In practice, however, training an encoder over this data can lead to the classifier severe overfitting on the context $s_0$, particularly in continuous settings. 

We can address this overfitting in two ways. The first is to instead learn a joint MI encoder, and fit an encoder to a sampled state in a trajectory (not necessarily the switch point) and a discounted future state, which creates a larger diversity of samples; we call this approach \methodName with a Joint MI classifier. The second is to generate additional trajectories from diverse switch points across the collected trajectories, preserving the conditional \gls{mi} definition of empowerment; we call this approach \methodName with a Conditional MI (CMI) classifier. \emph{The difference between the conditional MI versus the joint MI classifier variants of \methodName happens at the level of data collection and encoder training.}

\paragraph{Why the CMI variant estimates conditional MI}
If skills are sampled uniformly-at-random from the initial state $s_0$, the \gls{mi} $I(Z;S_0)$ equals $0$. Thus, a simple way to optimize the conditional \gls{mi} during the inner \gls{misl} optimization is to roll out skills sampled from the uniform marginal $p(z)$ and train the classifier as usual. Concretely, we copy a switch state $s_0$ then roll out fresh skill(s) from this switch state (i.e. without resets). The classifier only trains on these ``branching'' rollouts. Then, the distributions $p(z\mid s_0)$ and $p(z)$ are equivalent, and the joint \gls{mi} and conditional \gls{mi} are equivalent:
\begin{align*}
I(Z;S_0,S_+)&=I(Z;S_0)+I(Z;S_+\mid S_0)
=I(Z;S_+\mid S_0).
\end{align*}
An alternative approach is to train an estimator of $p(z \mid s)$; we did not pursue this approach to keep the method simple.

\subsection{Training empowerment value functions and policies}
\label{app:value_functions}

\subsubsection{Inner Optimization: Skill Learning/Effective Empowerment}
We consider continuous skill spaces for all of our learned skills.

\paragraph{Markovian Skill Policies \& Critics in Continuous Settings.}
The skill policies and value functions for the inner optimization problem condition on the start state, the current state, and a skill (\cref{sec:learn_skills}). As a result, the skill policy is a function of $(s, s_0, z)$ and the skill critic is a function of $(s, s_0, a, z)$. Meanwhile, the potential policy simply uses the pointwise estimate of the potential empowerment as a state-wise reward, so the potential policy and critics take as input $s$ and $(s,a)$, respectively.
Each \acrshort{sac} critic consists of two independent multilayer perceptrons, each with two 512-unit hidden layers, GELU activations, layer normalization, and a scalar Q-value output. 

\paragraph{Nonmarkovian Skill Policies \& Critics in Crafter.} In the case of a nonmarkovian policy as used in the Crafter environment, which is partially-observed, the policies and \acrshort{pqn}~\citep{gallici2025simplifying} critics take in an additional recurrent state. Following \citep{gallici2025simplifying}, both of the potential empowerment and skill learning Q-networks use a 512-unit fully connected layer with layer normalization and ReLU, followed by a 512-unit LSTM and a linear output giving one Q-value per action. The LSTM receives the full symbolic observaitons, the previous action, and the encoded inputs. The hidden and cell states carry information from earlier steps. 

In addition the recurrent state, the skill Q-network also receives the observation at the skill switch/initiation of the skill and the skill latent $z$. The recurrent state resets at the start of each episode and at the skill switch, so each skill starts from a fresh memory. An alternative option is to carry through the recurrent state through the skill switch, where empowerment is now defined at the recurrent state rather than the observation. Carrying the recurrent state through the skill switch did not substantially change the results.

\paragraph{Skill Rewards.} As mentioned in \cref{sec:learn_skills}, setting the reward to the normalized classifier score provides a lower bound on the MI in expectation, which we show later in \cref{prop:ba_bound_tight}. In practice, for the inner optimization problem, MISL methods like CSF~\citep{zheng2025misl} utilize the \emph{unnormalized} classifier as the reward, because the latter partition term has a skill-agnostic, anti-exploration effect. In our case, while we desire the exploration effects of the unnormalized reward, for the goal of potential empowerment maximization, we need the inner optimization to actually lower bound the pointwise potential empowerment. Thus, at the start of training, we begin with the unnormalized reward to get exploration ($\lambda=0$), then gradually turn on the anti-exploration term ($\lambda=1$) so the method estimates pointwise empowerment by the end of training:
The effective-policy reward is
\[
r_{\rm eff}(s,a,s';s_0,z)
=f_\phi(s_0,s',\textcolor{teal}{z})-\lambda\log\frac{1}{K}\Bigl(e^{f_\phi(s_0,s',\textcolor{teal}{z})}+\sum_{j=1}^{K-1}e^{f_\phi(s_0,s',\textcolor{brown}{z_j})}\Bigr),
\qquad \textcolor{brown}{z_j}\overset{\mathrm{iid}}{\sim}\mathrm{Unif}(\mathbb S^{d-1}),
\]
with $K = 1024$, where the negative skills $\textcolor{brown}{z_j}$ are fresh draws from the uniform skill prior $p(z)$.
Specifically, \methodName sets $\lambda=0$ for the first 30\% of training, increases it linearly to 1 over the next 50\%, and keeps it at 1 for the final 20\%. In experiments, we see that this annealing does not significantly change training in continuous environments. 
In Craftax-Classic, annealing gives the largest empowerment estimate (\cref{fig:ablation_craftax_lambda}). Meanwhile, using the normalized reward throughout training ($\lambda=1$) learns far fewer achievements. Using the unnormalized reward $\lambda = 0$ leads to the highest achievement count ($>9$ achievements), but the value functions are no longer estimates of a lower bound on empowerment by the end of training and are unnormalized objects.

\label{app:her}
\paragraph{Increasing sample efficiency with HER.}
The effective empowerment value functions and policies are heavily-parameterized and depend on the current state, start-state, and the skill. To improve sample efficiency and collect more training examples from the same set of the data, the base \methodName employs Hindsight Experience Relabeling (\acrshort{her}) for the start-state and skill during value function and policy training: while, during rollout, there is a true start-state, \methodName relabels the start-state by sampling geometrically backwards-in-time, then greedily relabels the executed skill. For example, say that the agent sampled a skill from $p(z)$ at start-state $s_0$, and the entire trajectory enters a replay buffer. During policy and value learning using a particular transition $(s_0, s,a,s'z)$, \methodName resamples an effective $s_0'$ backwards-in-time then sets $z \rightarrow z'$, where $z' = \argmax_{z \in \cZ} G(s_0', s,a,z)$ and $G(s_0', s,a,z)$ is a Monte-Carlo estimate of the cost-to-go. While we acknowledge and observe that this relabeling introduces bias and leads to some skill collapse, this relabeling helps scale the method.

\subsubsection{Outer Optimization: Potential Policy and Critic Optimization}

In the ideal pointwise setting, the potential empowerment critic simply takes the Monte-Carlo estimate of the expected skill critic over $8$ uniformly-sampled skills as the reward:
\[
r_{\rm pot}=\frac18\sum_{k=1}^{8} Q(s,a_k;s,z_k),
\quad z_k\sim p,\quad a_k\sim\pi_{\rm eff}(\cdot\mid s,z_k;s).
\]
Under the stated assumptions, in expectation, this estimate is a lower bound on the pointwise potential empowerment (see \cref{eq:pot_reward}).

\paragraph{Separating Actor and Critic Buffers from the Classifier Buffer.} All the critics used in this method can theoretically work in off-policy settings. However, the skill classifier used in the inner optimization cannot train on an extensively off-policy replay buffer to improve the \emph{current} policy's effective empowerment. \citet{levy2025representation} discuss the importance of training a classifier on the current policy. Thus, we maintain separate buffers for actor-critic training and classifier training, where the classifier buffer contains only recent experience and is generally much smaller than that of the actor-critic buffers. Exact parameters are in \cref{app:experimental_setup}.

\subsection{Empowerment maximization pseudocode}
\label{app:pseudocode}

\begin{algorithm}[htb]
\caption{\methodName}
\label{alg:misl}
\begin{algorithmic}[1]
\Require Skill prior $p(z)$, episode length $T$, switch time $M$, iterations $N$
\State Initialize potential policy $\pip$, skill policy $\pie$, their critics, classifier $f_\phi$, and buffers.
\For{$n=1,\ldots,N$}
    \State \textbf{Collect:} Run $\pip$ for $M$ steps from reset; record the switch state $s_0$ (\cref{sec:practical}).
    \State Draw $z\sim p(z)$ and run $\pie(\cdot\mid s,z;s_0)$ for $T-M$ steps, holding $z,s_0$ fixed.
    \State Store the two-stage rollout for policy and critic updates.
    \State \textbf{Fit classifier:} Train $f_\phi$ with InfoNCE on states and discounted future states
    \Statex \hspace{\algorithmicindent}from skill execution (Joint MI), or branches with fresh skills $z\sim p(z)$
    \Statex \hspace{\algorithmicindent}from copied starting states (CMI; \cref{app:data_collection}).
    \State \textbf{HER (SAC only):} Replace the anchor and skill of replayed skill transitions in hindsight (\cref{app:her}).
    \State \textbf{Inner optimization:} Update $\pie$ and its critics using $r_{\rm eff}$ with the current normalization weight $\lambda(n)$ (\cref{sec:learn_skills,app:value_functions}).
    \State \textbf{Estimate empowerment:} Average skill values over eight draws $z\sim p(z)$ (\cref{eq:pot_reward}).
    \State \textbf{Outer optimization:} Use this estimate as $r_{\rm pot}$ to update $\pip$ and its critic (\cref{sec:misl}).
    \Statex \hspace{\algorithmicindent}SAC uses replay; PQN uses the two-stage rollout; CMI branches train only $f_\phi$.
\EndFor
\State \Return $\pie$, $\pip$
\end{algorithmic}
\end{algorithm}
\FloatBarrier

\section[Experimental setup and evaluation]{Experimental setup and evaluation}
\label{app:experimental_setup}

\paragraph{Continuous Env Evaluation.} Like in the data collection during training, the evaluation rollouts run the potential empowerment policies for the first $100$ steps. Then, the rollout draws a skill $z \sim p(\cdot)$ and executes the effective empowerment policy until termination or truncation for the next $300$ steps. Evaluations average over $128$ episodes in $128$ new, sampled environments.

The potential empowerment target distance is the xy distance \emph{at the switch point}, which is the 100-step potential empowerment phase of the rollout. In Ant, this is the distance from the Ant to the center of the arena; In AntSoccer, this is the distance away from the soccer ball; In BuilderBench, this is the distance to any of the blocks. Meanwhile, the skill displacements in \cref{fig:ablations} measure the displacement from the switch point to the end of the skill rollout (next $300$ steps). In Ant, this is the displacement of the ant; In AntSoccer, this is the displacement of the ball; In BuilderBench, this is the displacement of the blocks.

\paragraph{Craftax Classic Evaluation.} We use the externally-defined achievement count for evaluation in Craftax-Classic, measuring the total achievements accrued over the course of the two-phase rollout in $128$ randomized evaluation environments.

Curves show the mean $\pm$ standard error across training seeds: five for \methodName and LatentGCE, and three for VIM and each ablation.
The learned empowerment metrics are plotted after per-seed EMA ($\alpha=0.05$) before averaging. The achievement metrics are unsmoothed.

\subsection{Continuous Experiment Details}
\label{app:cont_details}
In the figures, Ant denotes the full-state classifier (42 features), while AntSoccer denotes the classifier using only ball $x,y$; BuilderBench uses the three blocks' positions. Visitation counts the cumulative number of distinct planar cells seen during training. In Ant and AntSoccer, the cells are 1\,m$\times$1\,m xy tiles of either the ant (Ant) or ball (AntSoccer) location, with a total of $576$ cells. Note that \methodName can quickly get complete or close-to-complete coverage in both Ant and AntSoccer. In BuilderBench, the cells are 5\,cm$\times$5\,cm xy tiles pooled across the three blocks, totaling $192$ cells. For \methodName and ablations, we record and plot the visitation metrics for the skill phase of the rollouts. Because the baseline LatentGCE and VIM methods do not have a skill phase, we instead plot the visitation metrics over the entirety of the training rollouts.

\subsubsection{Continuous hyperparameters}
\label{app:exp_ant}
\label{app:exp_ball}
\label{app:exp_cube}

Hyperparameters are in \cref{tab:ab}. Training curves are in \cref{fig:training_curves} and learned \gls{mi} curves are in \cref{fig:continuous_empowerment}.

Both Ant and AntSoccer use the same arena, but use different encoder observations when training the classifier.  In Ant, the ant starts in a top-right start cell with a uniform, random offset in $[-1,1]^2$ m. The ball sits 1.5\,m past the center. In AntSoccer, the ant starts $80\%$ of the way along a line from the ant start to the center of the arena, where the ball starts at the center. BuilderBench sets the three cubes in a deterministic position in the corner. The gripper initialization is also deterministic at the center-edge of the play area, far from the blocks.

\begingroup
\small
\begin{longtable}{@{}>{\raggedright\arraybackslash}p{0.34\linewidth}*{3}{>{\raggedright\arraybackslash}p{\dimexpr0.22\linewidth-2\tabcolsep\relax}}@{}}
    \caption{Hyperparameters for \methodName with joint MI in continuous environments.}\label{tab:ab}\label{tab:ball}\label{tab:cube}\\
    \toprule
    Parameter & Ant & AntSoccer & BuilderBench \\
    \midrule
    \endfirsthead
    \multicolumn{4}{l}{\tablename\ \thetable\ (continued)}\\
    \toprule
    Parameter & Ant & AntSoccer & BuilderBench \\
    \midrule
    \endhead
    \midrule
    \multicolumn{4}{r}{Continued on next page}\\
    \endfoot
    \bottomrule
    \endlastfoot
    Encoder observations & Full state (42) & Ball $x,y$ (2) & Block positions (9) \\
    Observation noise std & 0.05 & 0.05 & 0.01 \\
    Skill dimension & 2 & 2 & 9 \\
    Online env steps per seed & 500M & 500M & 100M \\
    Replay capacity (transitions) & 10M & 10M & 6M \\
    Batch size & 1,024 & 1,024 & 256 \\
    Target entropy & 0 & 0 & $-12.5$ \\
    $\lambda$ annealing (env steps) & 150M--400M & 150M--400M & 30M--80M \\
    \midrule
    \multicolumn{4}{l}{\textbf{Shared settings}} \\
    Learner & \multicolumn{3}{l}{SAC} \\
    Training seeds & \multicolumn{3}{l}{5} \\
    Skill prior & \multicolumn{3}{l}{Uniform unit sphere} \\
    Potential / skill phase & \multicolumn{3}{l}{100 / 300 steps} \\
    Reward partition weight $\lambda$ & \multicolumn{3}{p{\dimexpr0.66\linewidth-2\tabcolsep\relax}@{}}{0 before the interval, linear increase to 1 from $30-80\%$ of training, 1 thereafter} \\
    Encoder buffer & \multicolumn{3}{l}{3,334 episodes} \\
    Learning rate & \multicolumn{3}{l}{$3\times10^{-4}$} \\
    SAC / encoder samples per collected transition & \multicolumn{3}{l}{4 / 1} \\
    Critic target update rate & \multicolumn{3}{l}{0.02} \\
    Policy / future-sampling discount & \multicolumn{3}{l}{0.99 / 0.99} \\
    Contrastive temperature & \multicolumn{3}{l}{0.1} \\
    Entropy temperature & \multicolumn{3}{l}{Learned, initialized to 1} \\
    Actor / critic / encoder hidden layers & \multicolumn{3}{l}{[512, 512]} \\
    Layer norm & \multicolumn{3}{l}{Critic and encoder only} \\
    Potential-value skill samples & \multicolumn{3}{l}{8} \\
\end{longtable}
\endgroup

\subsection{Discrete Experiment Details}
\label{app:disc_details}

\subsubsection{Craftax-Classic}
\label{app:exp_craftax}

Hyperparameters are in \cref{tab:craftax}. \cref{fig:craftax_tree} shows one two-phase episode. We use  PQN~\citep{gallici2025simplifying} rather than SAC to utilize the GPU acceleration in the Craftax environment.

\begin{figure}[htb]
    \centering
    \includegraphics[width=0.5\linewidth]{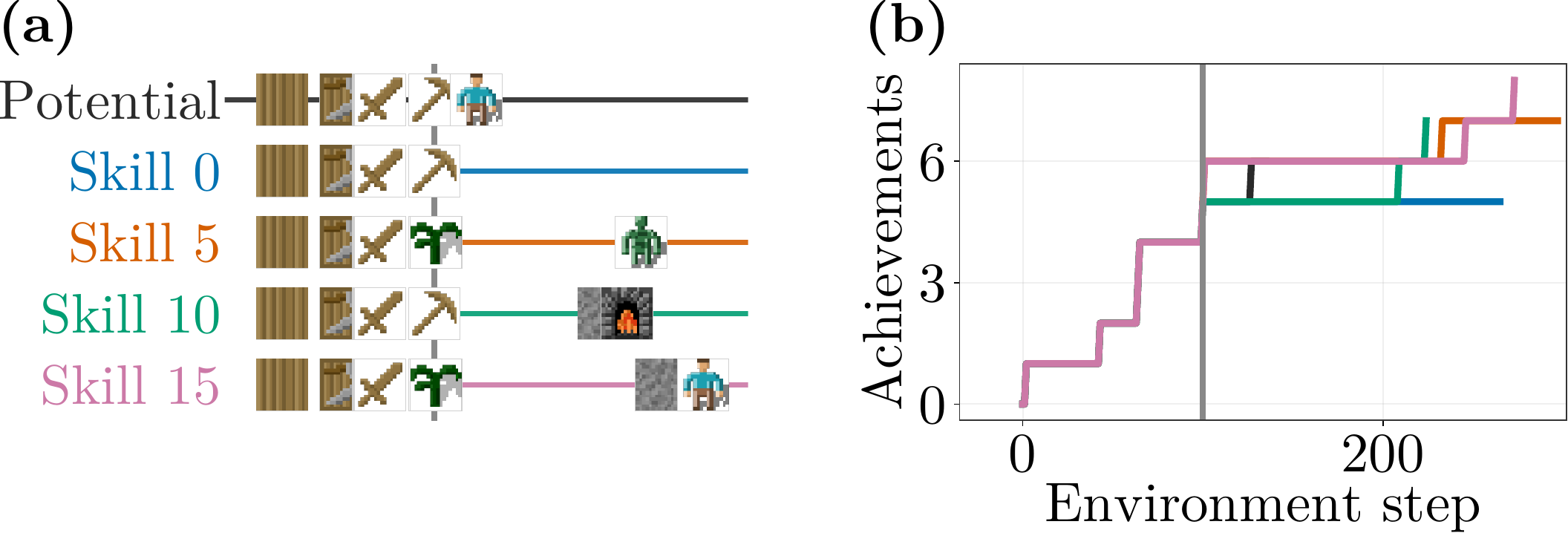}
    \caption{\textbf{Achievements in a single Craftax-Classic two-phase episode.} Icons indicate when achievements first occur. (a) The black line shows the achievements of the shared potential phase up to the skill switch (vertical gray line). In this rollout, these achievements include collecting wood, building a crafting a table, then crafting a wooden sword. The colorful skill rows show the achievements rolled out from the switch point sharing the same starting state, including sleeping/waking up, building a pickaxe, planting a tree, collecting stone, building a furnace, and defeating an enemy. (b) The lines indicate the cumulative achievement counts over the rollout, achieved without the use of any extrinsic reward. One seed.}
    \label{fig:craftax_tree}
\end{figure}

\begingroup
\small
\begin{longtable}{@{}>{\raggedright\arraybackslash}p{0.46\linewidth}>{\raggedright\arraybackslash}p{\dimexpr0.54\linewidth-2\tabcolsep\relax}@{}}
    \caption{Hyperparameters in Craftax-Classic (SEE, CMI (Ours); quantitative curves).}\label{tab:craftax}\\
    \toprule
    \textbf{Parameters} & \textbf{Value} \\
    \midrule
    \endfirsthead
    \multicolumn{2}{l}{\tablename\ \thetable\ (continued)}\\
    \toprule
    \textbf{Parameters} & \textbf{Value} \\
    \midrule
    \endhead
    \midrule
    \multicolumn{2}{r}{Continued on next page}\\
    \endfoot
    \bottomrule
    \endlastfoot
        Encoder observations & inventory (12 features) \\
        Encoder data & CMI branches from copied states \\
        Observation noise std (classifier bottleneck) & 0 \\
        Future-sampling discount & $0.95$ \\
        Logit temperature & 0.1 (fixed) \\
        Encoder learning rate & 1e-3 \\
        Encoder batch size & 4{,}096 \\
        Encoder hidden dims & [512, 512] \\
         Encoder Buffer & 10{,}000{,}000 branch transitions \\
        \midrule
        Skill dimension & 16 \\
        Skill prior & uniform unit sphere \\
        Potential / skill phase & 100 / 900 steps \\
        Training seeds & 5 \\
        Encoder branch horizon & 900 \\
        Extra simulator steps per live step & 0.9 from encoder branches, excluded from the live-step budget \\
        \midrule
        Online env steps per seed (live) & 2{,}000{,}000{,}000 \\
        \midrule
        Potential-value skill samples & 8 \\
        Skill / anchor relabeling & Disabled \\
        Reward partition weight $\lambda$ & 0 before 600M live steps, linear increase to 1 at 1.6B, 1 thereafter \\
        \midrule
        Learning rate & 3e-4 (linear decay to 0) \\
        Minibatches per rollout & 4, each with 256 sequences of 128 steps \\
        Update epochs & 4 \\
        Encoder updates per rollout batch & 4 \\
        Exploration & $\epsilon$-greedy: $1\!\to\!0.005$ over 100M live env steps \\
        Discount $\gamma$ & 0.95 \\
         PQN $\lambda$ & 0.5 \\
        Max gradient norm & 0.5 \\
        Q-network input normalization & Batch renormalization \\
\end{longtable}
\endgroup

\section{Baselines}

\label{app:baselines}

Unlike our method, both \gls{vim}~\citep{mohamed2015variational} and \gls{latentgce}~\citep{zhao2021efficient} consider open-loop definitions of empowerment, where $A_{0}^{H-1}$ is shorthand for the action sequence $(a_0, \dots, a_{H-1})$:
\begin{equation}
    \cE_{\mathrm{VIM}}(s_0)
    \triangleq
    \max_{\pi \in \Pi_\cZ}
   I^\pi(A_{0}^{H-1};S_{H} \mid S_0=s_0).
\end{equation}

\paragraph{VIM.} \gls{vim} utilizes a variational lower bound on the empowerment with a learned decoder $q_\theta (a_1^H\mid s_0,s_H)$, a generative model for the action sequence. VIM fits this decoder using a standard cross-entropy loss. Then, by definition, learning a source distribution $\omega(a_0^{H-1} \mid s_0)$ that maximizes this variational lower bound gives a lower bound on the true open-loop potential empowerment. We follow the autoregressive source and decoder in \citet[Sec.~4.2]{mohamed2015variational}, using tanh-Gaussians for our continuous actions.

Fixing the estimated decoder $q_\theta (a_1^H\mid s_0,s_H)$, at optimum, the source distribution $\omega(a_0^{H-1}\mid s_0)$ takes the form
\[
\omega^\star(a_0^{H-1}\mid s_0) \propto \exp(\EE_{p(s_H \mid s_0, a_{0}^{H-1})} [\log q_\theta (a_{0}^{H-1} \mid s_0, s_H)]).
\]
However, computing the partition function/normalizer is generally intractable. Instead, VIM regresses a function $f_\phi(s_0, a_{0}^{H-1}) \triangleq \log h_\phi (a_{0}^{H-1} \mid s) + \psi_\phi (s_0)$ onto the energy $\EE_{p(s_H \mid s_0, a_{0}^{H-1})} [\log q_\theta (a_{0}^{H-1} \mid s_0, s_H)]$, where $h_\phi$ is normalized. At optimum, the function $\psi_\phi (s_0)$ converges to an approximation of the open-loop potential empowerment by the standard variational Gibbs identity. We refer the reader to \citet{mohamed2015variational} for more details.

\paragraph{LatentGCE.} The next baseline we consider is \gls{latentgce}~\citep{zhao2021efficient}. Unlike \gls{vim}, \gls{latentgce} learns a generative, forward-in-time model to estimate empowerment. 
\gls{latentgce} learns a latent model of the environment that is a linear Gaussian channel, which automatically gives a closed-form expression for the open-loop potential empowerment (under power constraints).
 This approach sidesteps the challenges of optimizing a source distribution by directly learning a generative model with a linear Gaussian form.

\subsection{Validating Empowerment Fields in Simple Continuous Benchmarks}

We compare the learned empowerment fields in two simple, didactic continuous environments of a pendulum and ball-in-box discussed in LatentGCE. The learned empowerment fields are in \cref{fig:baseline_fields}. Our JAX implementations of LatentGCE (based on the original LatentGCE codebase) and VIM qualitatively reproduce the behavior reported in \cite{zhao2021efficient}, where LatentGCE succeeds to find the high empowerment region while VIM does not.

\begin{figure}[htb]
    \centering
    \includegraphics[width=0.75\linewidth]{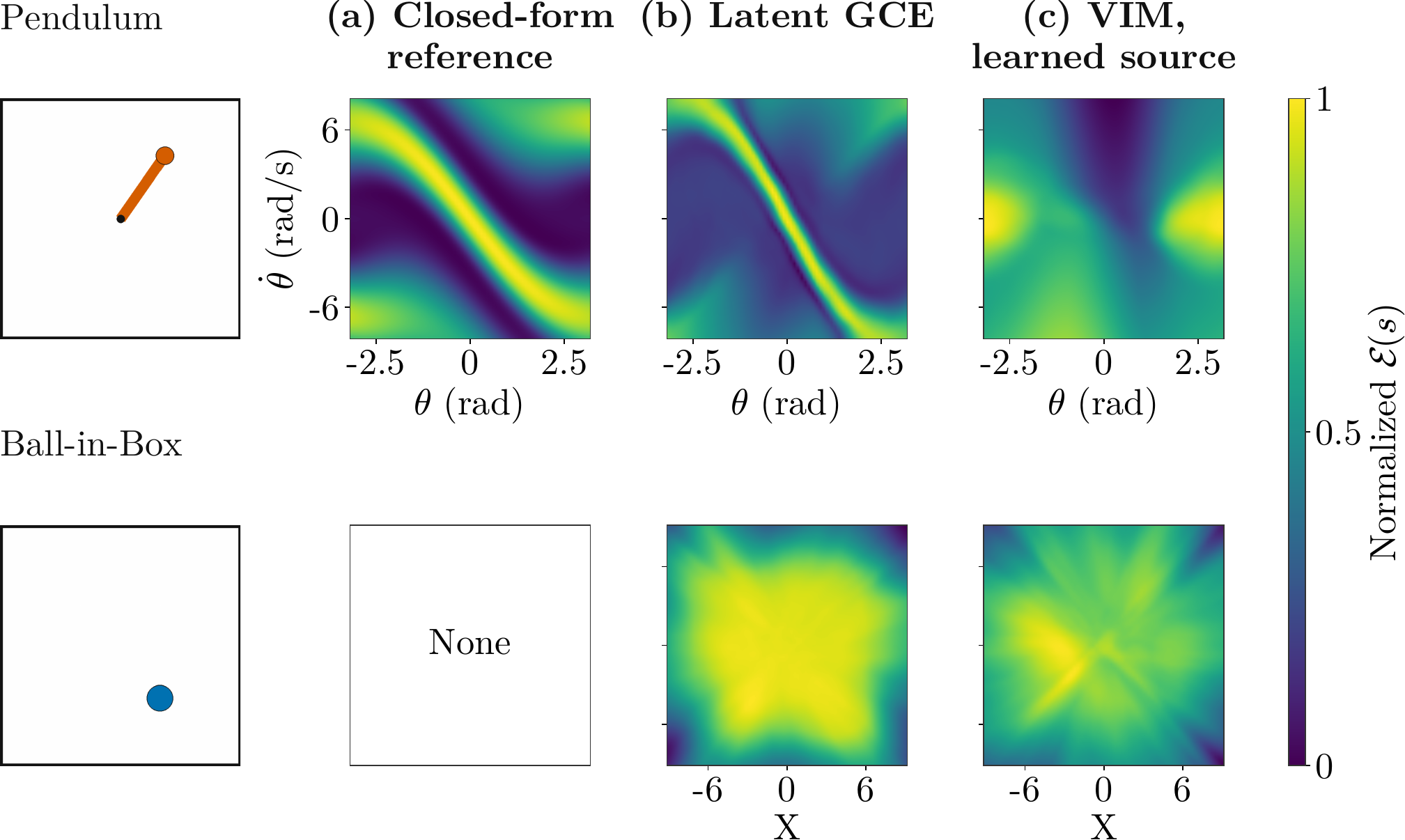}
    \caption{\textbf{Empowerment fields on low-dimensional benchmarks.} For the canonical environments of a pendulum and a ball in a box with hard walls, various empowerment maximization methods predict (normalized) empowerment fields over state space. (a) A linearized reference reveals that the potential empowerment is maximized when the pendulum points straight up. For the Ball in Box, the potential empowerment is maximal at the center of the box. Thus, a potential empowerment policy over a long effective horizon should seek a straight-up state in the pendulum and the central state for Ball in Box. (b) The empowerment field predicted by LatentGCE method correctly identifies these potential empowerment states. (c) VIM does not recover these high empowerment regions. One seed per learned field.}
    \label{fig:baseline_fields}
\end{figure}

\subsection[Baseline Implementations in Scaled Environments]{Implementation in the main evaluation environments}
\label{app:baseline_implementation}

We then apply VIM and LatentGCE to Ant, AntSoccer, and BuilderBench following the referenced manuscript methodology. Unfortunately, neither VIM nor LatentGCE provide full public codes of their methods. In particular, VIM has no publicised codebase and the public LatentGCE codebase does not implement the version of their method that learns latents. As such, we adapted the methodology discussed in the LatentGCE paper (originally applied to pixel-based tasks) in addition to their partial codebase to our scaled continuous settings \citep{zhao2021efficient}. We implemented VIM following the algorithm block included in the paper and include these implementations within our public codebase \citep{mohamed2015variational}. Key hyperparameters are included in \cref{tab:vim_hyperparameters,tab:latentgce_hyperparameters}.

Both methods estimate the empowerment over the same degrees of freedom as \methodName: full state in Ant, ball in AntSoccer, and block positions in BuilderBench. In LatentGCE, we replace the convolutional encoders originally designed for the pixel-based settings with standard MLPs. Both methods optimize the potential empowerment policies using \acrshort{ppo}. Since both methods estimate open-loop empowerment, we consider empowerment over open-loop action sequences of length $H=30$ for VIM and $H=100$ for LatentGCE. We chose a shorter sequence length for VIM due to the observed instability of the algorithm. We chose $H=100$ for LatentGCE to match the effective trajectory length scale for the rollouts elsewhere in the paper, corresponding to a disount factor $\gamma =0.99$. \Cref{tab:vim_hyperparameters,tab:latentgce_hyperparameters} list the key hyperparameters, with full implementation provided in the linked code. In general, we find that VIM exhibits unstable training. In VIM, defining an energy for the source action sequence distribution with the learned decoders led to unstable source updates, which then negatively impacted future data collection. 
In earlier attempts at running LatentGCE, exponentiating the empowerment estimate produced large rewards~\citep{zhao2021efficient}. Thus, we instead set the reward as the empowerment, following the released LatentGCE pseudocode and the methodology in the rest of this paper.

\begin{table}[htbp]
\centering
\small
\caption{VIM hyperparameters for Ant, AntSoccer, and BuilderBench. Each environment uses 3 training seeds.}\label{tab:vim_hyperparameters}
\begin{tabular}{@{}>{\raggedright\arraybackslash}p{0.46\linewidth}>{\raggedright\arraybackslash}p{\dimexpr0.54\linewidth-2\tabcolsep\relax}@{}}
\toprule
Parameter & Value \\
\midrule
Training budget (Ant / AntSoccer / BuilderBench) & 500M / 500M / 100M \\
Episode length / probe horizon $H$ & 400 / 30 steps \\
Parallel environments / rollout length & 256 / 80 steps \\
Probe starts per update & 256, one uniformly sampled rollout time per environment \\
\midrule
\multicolumn{2}{l}{\textbf{PPO controller}} \\
Actor / critic hidden layers & [256, 256], tanh \\
Optimizer / learning rate & Adam / $3\times10^{-4}$, linear decay \\
Discount $\gamma$ / GAE $\lambda$ & 0.99 / 0.95 \\
Update epochs / minibatches & 4 / 4 \\
PPO policy and value clipping / max. gradient norm & 0.2 / 0.5 \\
Entropy / value loss coefficients & 0.01 / 0.5 \\
Action distribution & Tanh-Gaussian, actions in $[-1,1]^{d_a}$ \\
Observation normalization & Running mean and variance \\
\midrule
\multicolumn{2}{l}{\textbf{Empowerment estimator}} \\
Estimator replay capacity / minibatch & 10,000 tuples / 1,024 \\
Estimator passes over replay per update & 10 \\
Learning rate & $3\times10^{-4}$ \\
Hidden layers & Source/decoder [512, 512, 512] \\
Additional hidden layers & Empowerment head [256, 256, 256] \\
Activation & ReLU \\
VIM energy coefficient & 1 \\
Controller reward & Estimated empowerment, clipped below at zero (due to instability) \\
\bottomrule
\end{tabular}
\end{table}

\begin{table}[htbp]
\centering
\small
\caption{LatentGCE hyperparameters for Ant, AntSoccer, and BuilderBench. Each environment uses 5 training seeds.}\label{tab:latentgce_hyperparameters}
\begin{tabular}{@{}>{\raggedright\arraybackslash}p{0.46\linewidth}>{\raggedright\arraybackslash}p{\dimexpr0.54\linewidth-2\tabcolsep\relax}@{}}
\toprule
Parameter & Value \\
\midrule
Training budget (Ant / AntSoccer / BuilderBench) & 500M / 500M / 100M \\
Episode length / probe horizon $H$ & 400 / 100 steps \\
Parallel environments / rollout length & 256 / 80 steps \\
Probe starts per update & 256, one uniformly sampled rollout time per environment \\
\midrule
\multicolumn{2}{l}{\textbf{PPO controller}} \\
Actor / critic hidden layers & [256, 256], tanh \\
Optimizer / learning rate & Adam / $3\times10^{-4}$, linear decay \\
Discount $\gamma$ / GAE $\lambda$ & 0.99 / 0.95 \\
Update epochs / minibatches & 4 / 4 \\
PPO policy and value clipping / max. gradient norm & 0.2 / 0.5 \\
Entropy / value loss coefficients & 0.01 / 0.5 \\
Action distribution & Tanh-Gaussian, actions in $[-1,1]^{d_a}$ \\
Observation normalization & Running mean and variance \\
\midrule
\multicolumn{2}{l}{\textbf{Empowerment estimator}} \\
Estimator replay capacity / minibatch & 10,000 tuples / 1,024 \\
Estimator passes over replay per update & 10 \\
Learning rate & $10^{-4}$ \\
Hidden layers & Encoders/channel [512, 512, 512] \\
Additional hidden layers & Reconstruction decoders [512, 512] \\
Activation & ReLU \\
State / action latent dimension & 32 / 32 \\
Channel power budget & 1 \\
Latent prediction / state reconstruction loss weights & 1 / 500 \\
Controller reward & Open-loop empowerment estimate in bits \\
Action reconstruction output & Tanh in $[-1,1]$ \\
Action reconstruction / latent norm loss weights & 10 / 1 \\
\bottomrule
\end{tabular}
\end{table}

\begin{figure}[htbp]
    \centering
    \includegraphics[width=0.9\linewidth]{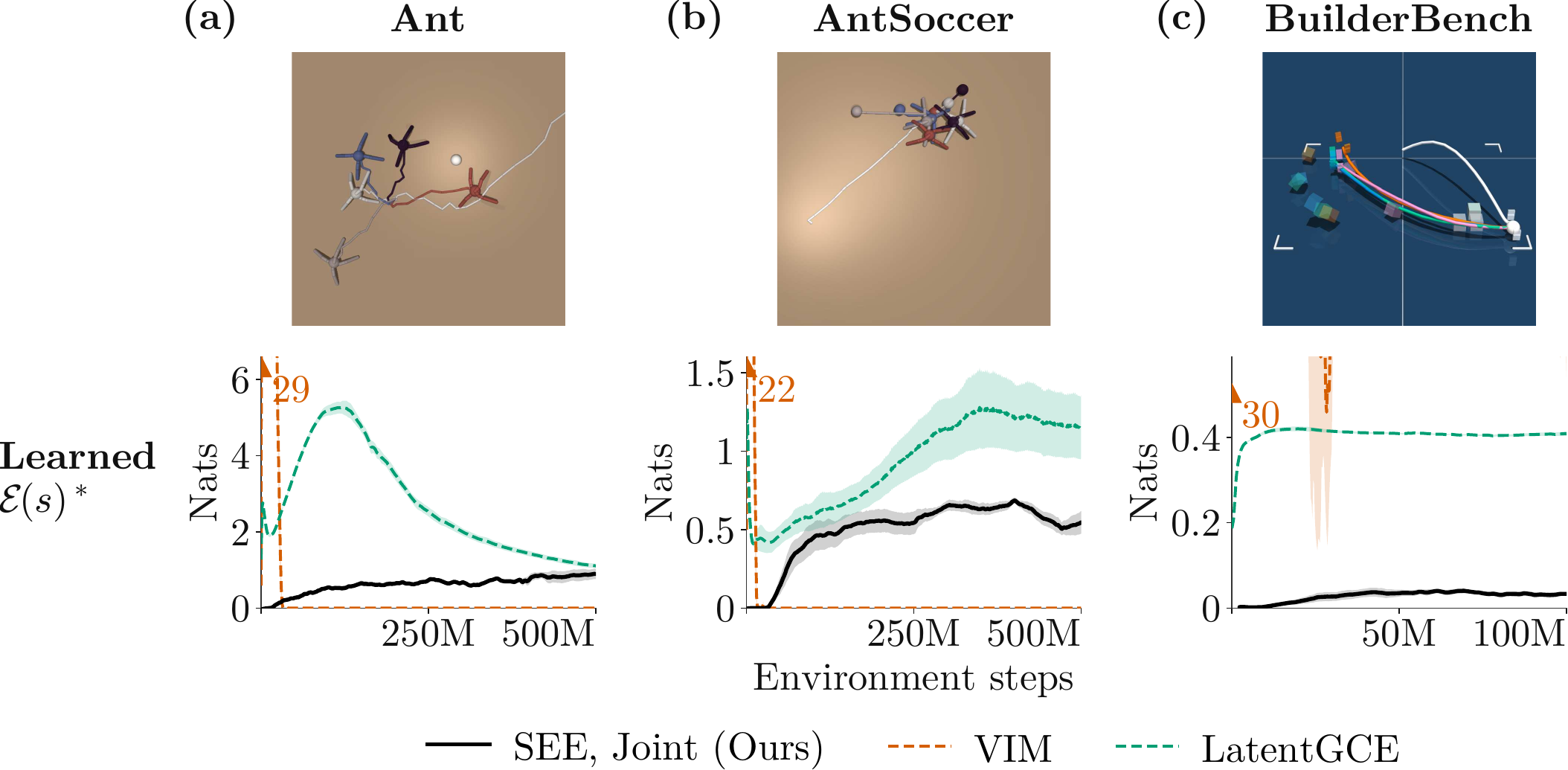}
    \caption{\textbf{Learned empowerment in continuous settings.} \methodName (Black) empowerment values gradually increase over time. Meanwhile, VIM (Orange) empowerment values are unstable and are very large at the start of training (Orange arrows point beyond the displayed y-axis). The LatentGCE (Green) empowerment values do not monotonically increase in time. *As a note, learned estimates use different objectives and regularization, so their magnitudes are not directly comparable across methods. Five seeds for \methodName and LatentGCE and three for VIM, p.m. standard error.}
\label{fig:continuous_empowerment}
\end{figure}

\section[Ablation studies]{Ablation Details}

Here, we include additional ablations and ablation details. The corresponding figures are in \cref{fig:ablations} and \cref{fig:ablations_craftax_structural}. Default \methodName utilizes the joint \gls{mi} classifier in the continuous settings and the CMI classifier in the Crafter setting.

\subsection{Policy switching.}

The first design decision we ablate is policy switching. An alternative is to separately train potential empowerment and effective empowerment policies:
\begin{description}[style=standard, widest={Ablation:}, leftmargin=*, labelindent=0pt, align=left, labelsep=0.6em, nosep]
    \item[\textbf{Ours:}] Policy switching from potential to effective during data collection.
    \item[\textbf{Ablation:}] Training skills and potential policy from the the same starting state, without skill switching. To allocate similar training time to the potential and skill phases as in the base \methodName, we roll out using the potential policy $25\%$ of the time, and the skill phase the other $75\%$.
\end{description}
 Without switching, the method cannot consistently learn meaningful potential empowerment policies in Ant and BuilderBench. The potential policy ends up far away from the center of the arena/ball/blocks in Ant/AntSoccer/BuilderBench and does not accumulate many achievements in Crafter-Classic.

\subsection{Conditional versus full \gls{mi}.}

CMI uses the branching rollout scheme detailed in \cref{app:data_collection}. All other training stays fixed. We observe that using CMI worsens training in the high-dimensional, continuous setting of Ant where it is easy for a properly context-conditioned classifier to overfit. The potential policy does not move much. Meanwhile, using the Joint MI in Craftax-Classic leads to collapsed empowerment values despite high achievement scores. In BuilderBench, CMI and \methodName perform similarly. In AntSoccer, \methodName kicks the ball further in the skill phase, though both methods learn potential policies that approach the ball.

\subsection[Pair-decoder parameterization]{ Pair-decoder parameterization.}

We compare two logits:
\begin{align*}
\text{Difference encoder:}\quad &f_\phi(s,s_+,z)=z^\top[\phi(s_+)-\phi(s)]/\tau,\\
\text{Pair decoder:}\quad &f_\psi(s,s_+,z)=z^\top g_\psi([s,s_+])/\tau.
\end{align*}
We use the same network architecture for both logit parameterizations. All other experiment details are identical to base \methodName. The pair decoder worsens visitation, target distance, and skill displacement in Ant and AntSoccer, despite having higher learned empowerment estimates in Ant. For example, in AntSoccer, the pair decoder visits $280$ cells versus $539$ for \methodName. The learned skills barely move the ball. 
This ablation highlights the importance of regularizing the classifier when doing empowerment maximization in scaled settings.

\begin{figure}[htb]
    \centering
    \includegraphics[width=\linewidth]{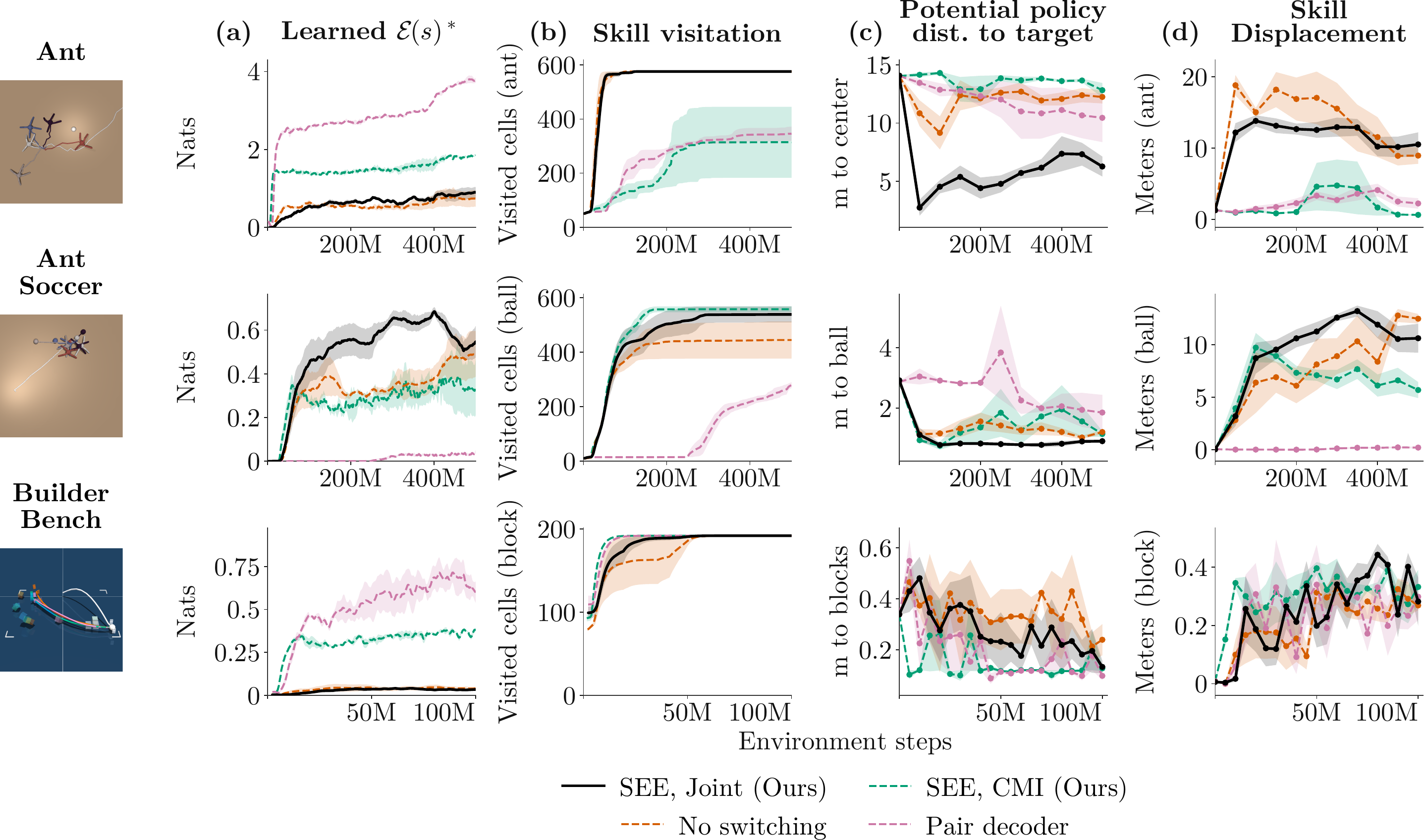}
    \caption{We compare the full method with variants that remove the two-phase rollout (no switching, orange), replace joint \gls{mi} with conditional \gls{mi} (green), or replace the bilinear logit with a general MLP (pair decoder, purple). The solid black curve shows \methodName, and the colored dashed curves show the ablations. (a) For all methods, average empowerment increases over training. $^*$ Learned empowerment estimates use different objectives and regularization, so their magnitudes are not directly comparable across methods. (b) The skill visitation of \methodName quickly saturates for these environments. Meanwhile, the ablations struggle to get visitation in a subset of the different environments. (c) \methodName consistently learns potential policies that reach the center of the arena (Ant), approach the ball (AntSoccer), and reach the blocks (BuilderBench). In BuilderBench, conditional \gls{mi} and the pair decoder potential empowerment rollouts reach the blocks, while no switching ends far away. In Ant, all ablations end far from the center of the arena. In AntSoccer, the ablations end up further from the ball than \methodName. (d) \methodName consistently learns effective policies with large displacements. Five seeds for \methodName; three per ablation.} 
    \label{fig:ablations}
\end{figure}

\begin{figure}[htb]
    \centering
    \includegraphics[width=\linewidth]{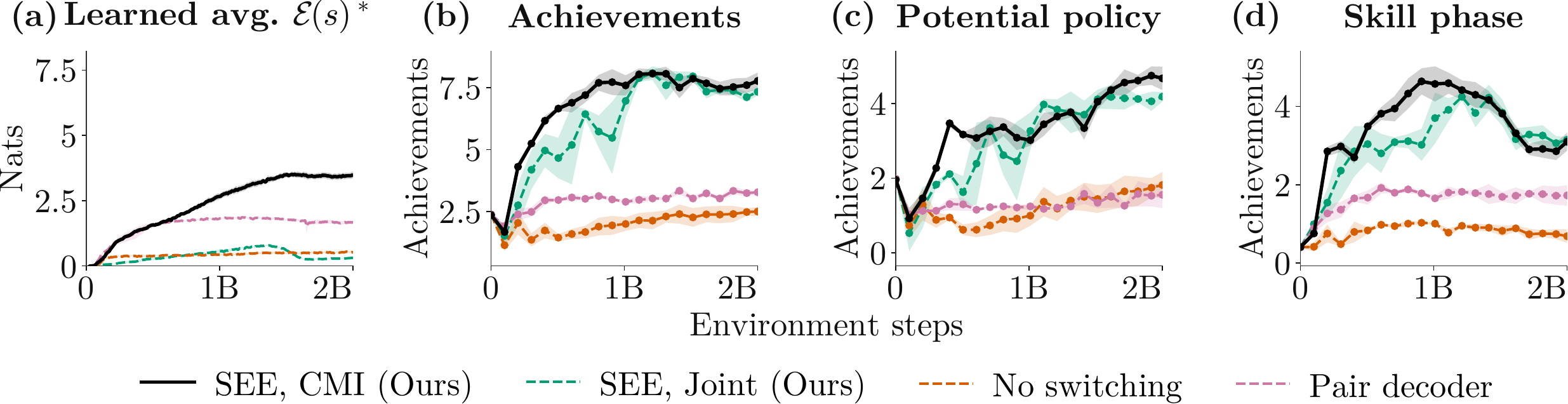}
    \caption{Craftax-Classic ablations (CMI \methodName: 5 seeds; each ablation: 3). The black curve is the \methodName with the CMI classifier (Ours). The ablations are SEE with Joint MI (Ours), no switching (25\% potential-policy episodes during training), and pair decoder. (a) Learned empowerment estimates rise over training for the base method. (b) Total achievements per rollout. (c) Achievements in the first 100 potential-policy actions. (d) New achievements in the subsequent skill phase of up to 900 actions. All methods use 128 fresh worlds and skills per seed and evaluation checkpoint. Curves show mean $\pm$ standard error across training seeds, with per-seed EMA ($\alpha=0.05$) only in (a). Achievement panels are unsmoothed and display every tenth evaluation checkpoint plus the final checkpoint.} \label{fig:ablations_craftax_structural}
\end{figure}

\subsection{Reward normalization}
\label{app:reward_normalization}

We compare the normalization schedule in \cref{app:value_functions} with fixed $\lambda=0$ and $\lambda=1$ (\cref{fig:ablation_rewards,fig:ablation_craftax_lambda}). Overall, we find that the normalization schedule does not significantly change the training metrics in continuous settings (\cref{fig:ablation_rewards}). However, in Craftax-Classic, normalization $(\lambda = 1)$ throughout training significantly hurt the learned average empowerment values (derived from the classifier). In this setting, we find that the normalization schedule is critical to learn value functions that estimate lower bounds of the potential empowerment (\cref{fig:ablation_craftax_lambda}). 

\begin{figure}[htb]
    \centering
    \includegraphics[width=\linewidth]{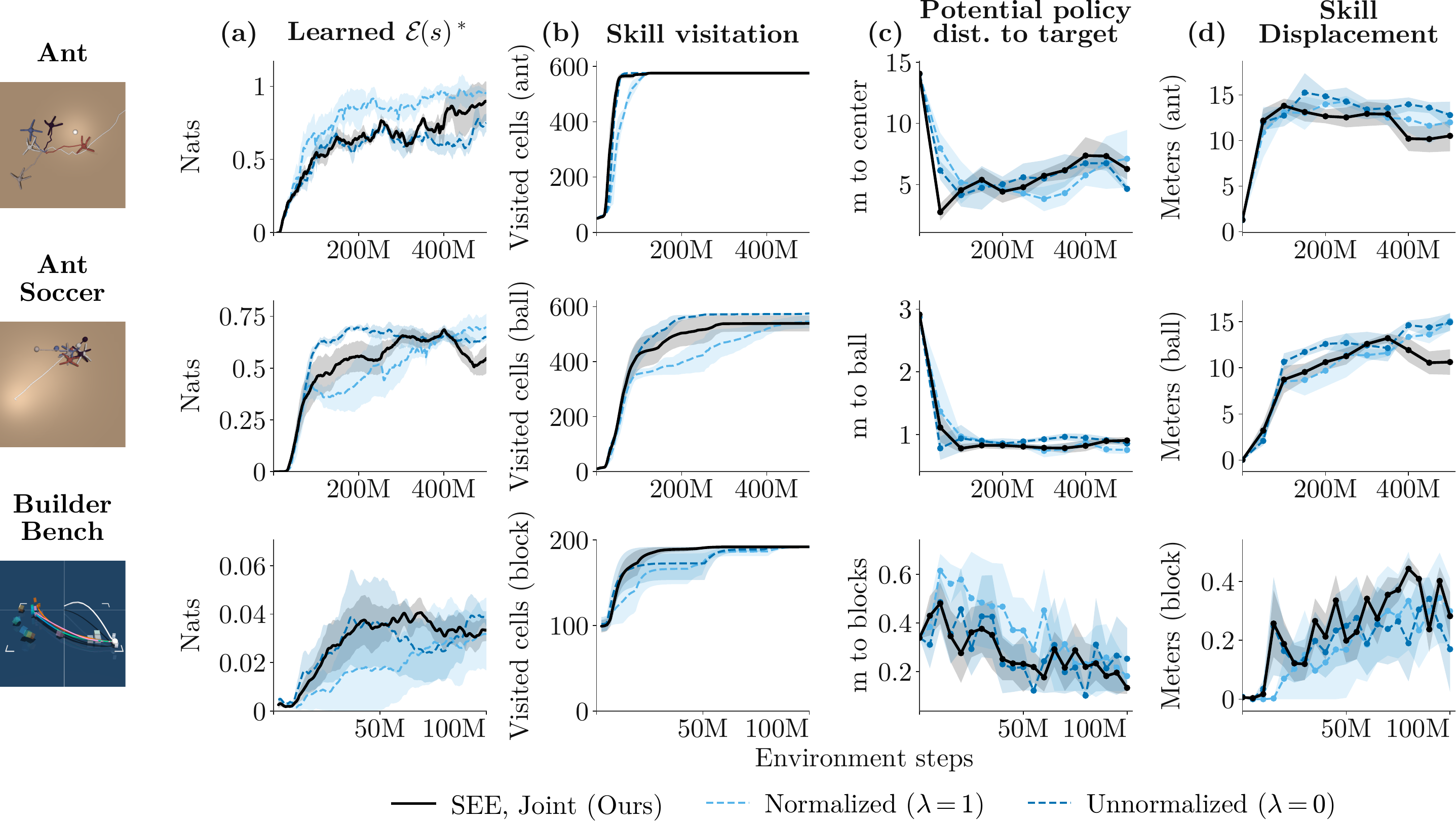}
    \caption{\textbf{Reward normalization in continuous environments.} We compare the annealed schedule with normalized ($\lambda=1$) and unnormalized ($\lambda=0$) rewards. Columns show learned empowerment, training skill visitation, target distance after 100 potential-policy actions, and displacement during the subsequent 300 skill actions. Learned estimates use different objectives and are not directly comparable across settings. Five seeds for \methodName; three per ablation. Curves show mean $\pm$ standard error across seeds. Training curves use per-seed EMA ($\alpha=0.05$); evaluation curves are unsmoothed.}
    \label{fig:ablation_rewards}
\end{figure}

\begin{figure}[htb]
    \centering
    \includegraphics[width=0.5\linewidth]{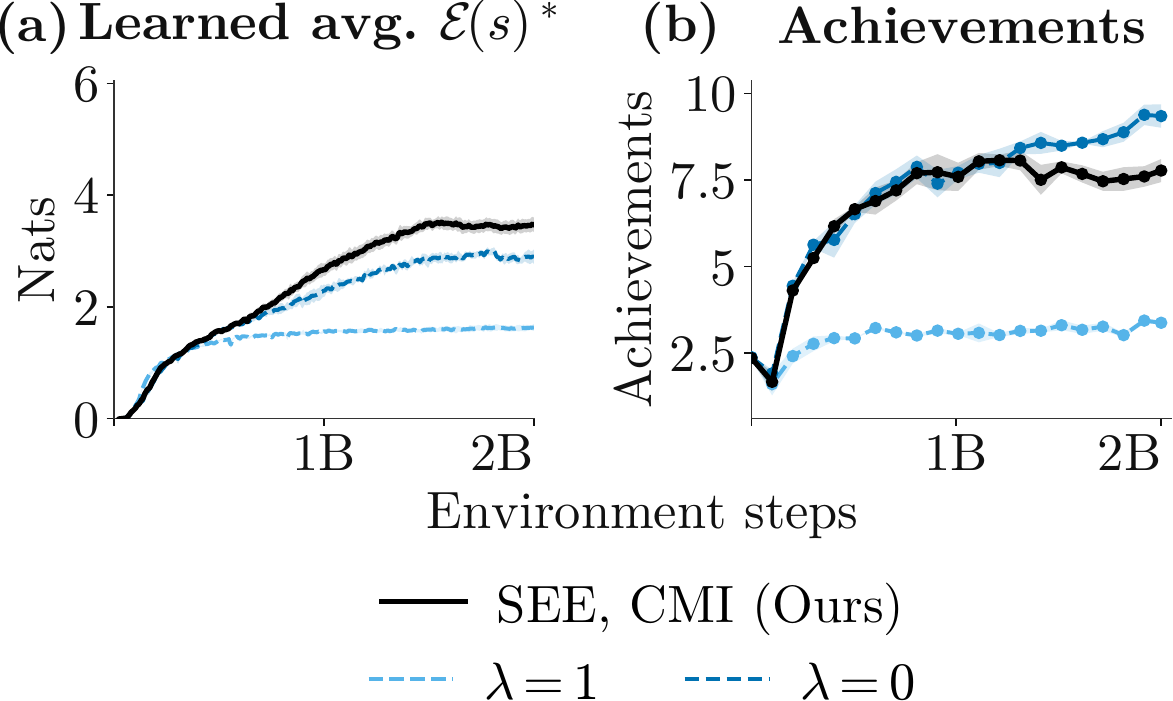}
    \caption{\textbf{Reward normalization in Craftax-Classic.} We compare CMI \methodName with the annealed schedule against fixed $\lambda=0$ and $\lambda=1$. Five seeds for \methodName; three per ablation. Curves show mean $\pm$ standard error across seeds. The learned-estimate panel uses per-seed EMA ($\alpha=0.05$). Achievements use 128 fresh worlds and skills per seed and checkpoint, with 100 potential-policy actions followed by up to 900 skill actions. Achievement evaluations are unsmoothed; every tenth checkpoint and the final checkpoint are shown.}
    \label{fig:ablation_craftax_lambda}
\end{figure}

\clearpage
\section[Additional qualitative examples]{Additional qualitative examples}
\label{app:qualitative}

We include additional examples of emergent environment modification across all seeds.

\begin{figure}[htb]
    \centering
    \includegraphics[width=0.5\linewidth]{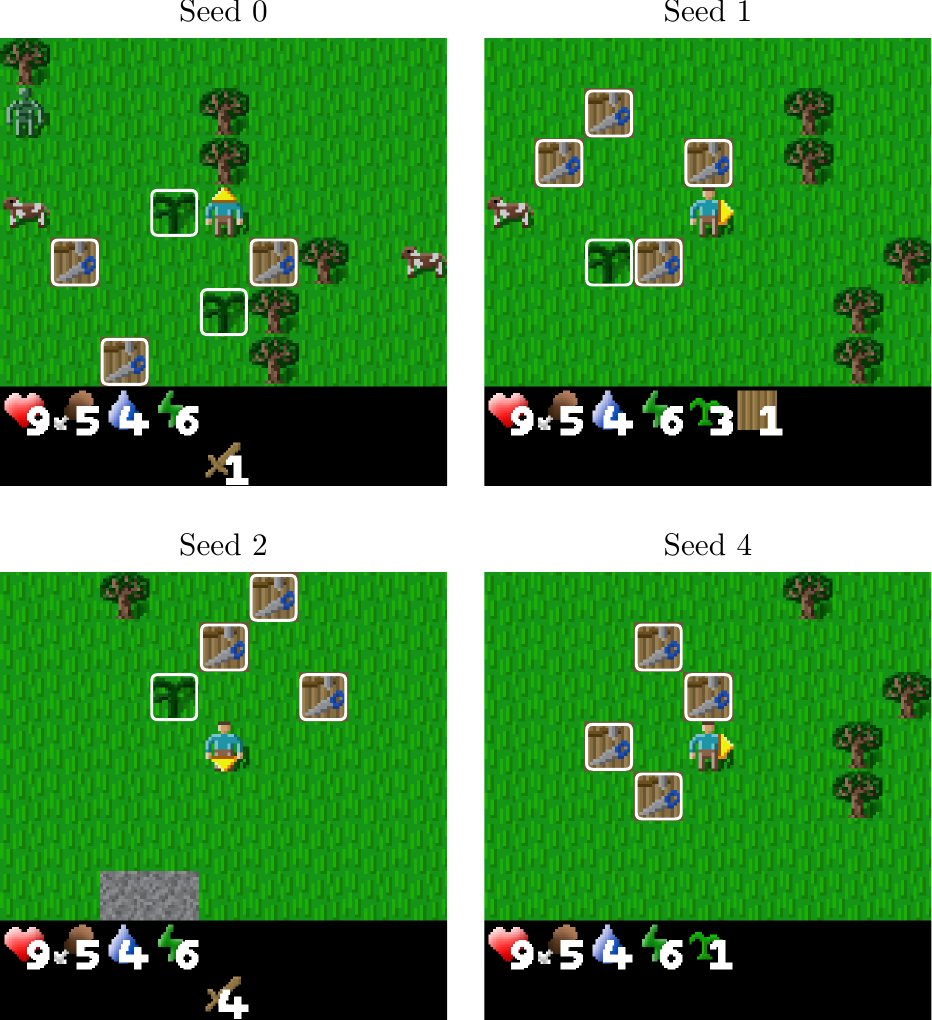}
    \caption{\textbf{Additional environment-modification behaviors.} Stills of environment modifications over the remaining $4$ seeds in Craftax-Classic. Each panel shows a selected rollout. White outlines box placed tiles.}
    \label{fig:building_examples}
\end{figure}

\section{Theory}
\label{app:theory}

Below, we include the key statement and proof that justifies the empowerment-maximization method. While the particular parameterizations of the underlying \gls{misl} methods to maximize potential empowerment are novel, we explicitly note that prior works such as VIC~\citep{gregor2017variational} have already shown versions of the same theoretical results. We include these statements for completeness and emphasize that the primary contribution of our work is an empirical demonstration of tractable empowerment maximization, rather than novel theoretical results.

\subsection{Policy improvement of the potential empowerment lower-bound}

The crux of (most) modern \gls{misl} methods is to learn a classifier that identifies skills from states, transitions, or trajectories, then use that classifier as a reward. In expectation, the cumulative reward estimates/lower bounds an \gls{mi} (see \cref{sec:learn_skills,app:effective_lb}). Policies that maximize this \gls{mi} become interpretable skills. In our method, we use the resulting \gls{mi} estimate as a reward, training a \textbf{potential empowerment policy} where agents navigate to states where they have many options. We emphasize that this result is a modification of the result proved in \cite{gregor2017variational}. We do not claim that this is a significant theoretical contribution, and include these results for completeness to show that, under assumptions, our method gives policy improvement. In practical settings with function approximation, these assumptions do not hold.

Here, we prove that alternating (1) 
fitting an on-policy skill classifier and (2) running policy improvement using the trained classifier as a reward improves a lower bound on the potential empowerment. The result requires two preliminaries. The first preliminary is the Barber-Agakov bound~\citep{barber2003im} which is a variational bound on \gls{mi} that comes from the positivity of the Kullback-Liebler Divergence; we have already invoked this bound when introducing the classifier rewards. The second preliminary is the statement that a standard \gls{mdp} admits greedy policy improvement~\citep{puterman2014markov}.

The proofs use the induced posterior $q_\phi$, where we drop the $\phi$ subscript for notational simplicity. Note that we can always recover an effective posterior from the learned classifier, since we know the baseline $p(z)$ exactly.

We fix a start state $s_0$ and a skill prior $p(z)$ shared across iterations. The discounted future-state distribution is
\[
p^\pi(s_+\mid s_0,z)=(1-\gamma)\sum_{k=0}^{\infty}\gamma^k
\Pr_\pi(S_{k+1}=s_+\mid S_0=s_0,Z=z).
\]
For continuous states, the probabilities are replaced by densities.

\makerestatable
\begin{theorem}[Policy improvement on a lower bound of potential empowerment]
    \label{thm:policy_improvement}
Let $0 < \gamma < 1$ be the discount factor and let $\pi^{(n)}(a \mid s, z; s_0)$ be the current skill-conditioned policy at iteration $n$, with $z \sim p(\dmy{z})$ sampled at $s_0$ and held fixed. Fit an on-policy classifier $q^{(n)}(z \mid s_0, s_+)$ to $p^{\pi^{(n)}}(z \mid s_0, s_+)$ and define the $(1-\gamma)$-normalized action value
\[
\begin{aligned}
Q_z^{(n)}(s,a; s_0)
&\triangleq Q_z^{\pi^{(n)},q^{(n)}}(s,a; s_0) \\
&\triangleq (1-\gamma)\,
\E_{\pi^{(n)}(\dmy{a}\mid \dmy{s}, z; s_0)}\left[
\sum_{k=0}^{\infty}\gamma^k
\log \frac{q^{(n)}(z \mid s_0, s_{t+k+1})}{p(z)}
\middle| s_t=s,\; a_t=a
\right].
\end{aligned}
\]
Let $\pi^{(n+1)}$ be the result of a greedy policy-improvement step with respect to $Q_z^{(n)}(\dmy{s},\dmy{a}; s_0)$ for every skill $z$ and start-state $s_0$. If the classifier is perfectly fit on-policy at each iteration, $q^{(n)}(z \mid s_0, s_+) = p^{\pi^{(n)}}(z \mid s_0, s_+)$, then
\[
I^{\pi^{(n+1)}}(Z; S_+ \mid S_0=s_0) \geq I^{\pi^{(n)}}(Z; S_+ \mid S_0=s_0)
\qquad \forall n
\]
which proves policy improvement under the assumptions of a perfectly-fit classifier.
\end{theorem}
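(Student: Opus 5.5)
We follow the standard variational EM / VIC argument: sandwich the new mutual information between two values of a surrogate objective. Define, for any policy $\pi$ and any fixed classifier $q$, the surrogate
\[
J(\pi, q) \triangleq \E_{p(z)}\bigl[V_z^{\pi,q}(s_0; s_0)\bigr]
= \E_{p(z)}\E_{s_+\sim p^{\pi}(\cdot\mid s_0,z)}\Bigl[\log \tfrac{q(z\mid s_0,s_+)}{p(z)}\Bigr],
\]
where $V_z^{\pi,q}(s;s_0)=\E_{\pi}[Q_z^{\pi,q}(s,a;s_0)]$. The equality uses the $(1-\gamma)$ normalization: unrolling $Q_z$ from $s_t=s_0$ gives exactly an expectation under the \gls{dsom} of \cref{eq:dsom}, which is why the sum starts at $k=0$ with $s_{t+k+1}$. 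The proof then proceeds in three steps, applied at each fixed $s_0$.

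\textbf{Step 1 (tightness at the current policy).} We use the Barber--Agakov bound~\citep{barber2003im}. For any $q$,
\[
I^\pi(Z;S_+\mid S_0=s_0) - J(\pi,q) = \E_{p^\pi(s_+\mid s_0)}\,\mathrm{KL}\bigl(p^\pi(\cdot\mid s_0,s_+)\,\|\,q(\cdot\mid s_0,s_+)\bigr) \ge 0 .
\]
With the perfectly-fit classifier $q^{(n)}=p^{\pi^{(n)}}(z\mid s_0,s_+)$, the KL term vanishes. Hence $J(\pi^{(n)},q^{(n)}) = I^{\pi^{(n)}}(Z;S_+\mid S_0=s_0)$.

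\textbf{Step 2 (policy improvement with $q^{(n)}$ frozen).} For each fixed $(z,s_0)$, the reward $r(s')=(1-\gamma)\log\frac{q^{(n)}(z\mid s_0,s')}{p(z)}$ is a fixed function of the next state. Since $z$ and $s_0$ are held fixed during the rollout, the augmented state $(s,z,s_0)$ makes this a standard discounted \gls{mdp}. The classical policy improvement theorem~\citep{puterman2014markov} then gives $V_z^{\pi^{(n+1)},q^{(n)}}(s;s_0)\ge V_z^{\pi^{(n)},q^{(n)}}(s;s_0)$ for all $s$, and in particular at $s=s_0$. Averaging over $p(z)$, which is unchanged across iterations since $z$ is sampled independently of the policy, yields $J(\pi^{(n+1)},q^{(n)})\ge J(\pi^{(n)},q^{(n)})$.

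\textbf{Step 3 (lower bound at the new policy).} Apply Step 1's inequality to $\pi^{(n+1)}$ with the stale classifier $q^{(n)}$:
\[
I^{\pi^{(n+1)}}(Z;S_+\mid S_0=s_0)\ge J(\pi^{(n+1)},q^{(n)}) .
\]
Chaining the three relations gives $I^{\pi^{(n+1)}} \ge J(\pi^{(n+1)},q^{(n)}) \ge J(\pi^{(n)},q^{(n)}) = I^{\pi^{(n)}}$.

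The main obstacle is the bookkeeping in Step 2 and the identity $J=\E_{p(z)}[V]$. We must check that the value at $s_0$ is exactly the \gls{dsom}-expectation, with no off-by-one in the time indices. We must also check that the reward is Markov in the augmented state, which holds only because $q^{(n)}$ is frozen and $z,s_0$ are fixed. A subtle point is that $\log q^{(n)}$ may be $-\infty$ where $q^{(n)}$ vanishes but $\pi^{(n+1)}$ reaches. We handle this by noting that the inequalities remain valid in the extended reals: the new policy would never choose such actions, since greedy improvement never decreases value. Alternatively, we can assume bounded log-ratios, for example via full support.
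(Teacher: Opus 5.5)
Your proposal is correct and follows the paper's proof essentially step for step. It uses the same three ingredients: tightness of the Barber--Agakov bound at the on-policy classifier, monotone improvement of the frozen-classifier surrogate via standard MDP policy improvement, and the Barber--Agakov inequality at the new policy. The paper phrases that last step as refitting to $q^{(n+1)}$ and noting $\cL(\pi^{(n+1)},q^{(n+1)})\ge\cL(\pi^{(n+1)},q^{(n)})$, which is the same fact you apply directly to the stale $q^{(n)}$; your extra bookkeeping on the discounted-occupancy identity and the $-\infty$ log-ratio issue covers technicalities the paper leaves implicit.
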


Improving the \emph{estimated potential empowerment-seeking behavior} reduces to standard \gls{rl} with the same classifier intrinsic reward. Let $Q_z^{\pi,q}$ and $V_z^{\pi,q}$ denote values under policy $\pi$ and classifier $q$ where $n$ indexes training iterations,
and
$V_z^{\pi,q}(s;s_0)=\E_{a\sim\pi(\cdot\mid s,z;s_0)}[Q_z^{\pi,q}(s,a;s_0)]$. Then,
\begin{align}
    \widehat{\gEp}^{(n)}(s)
    &\approx \E_{p(\dmy{z})} \left[V_z^{\pi^{(n)},q^{(n)}}(s;s)\right] \\
    &\leq I^{\pi^{(n)}}(Z; S_+ \mid S_0 = s) \\
    &\leq \gEp(s) \\
    &= \max_{\pi \in \Pi_\cZ} I^\pi (Z;S_+ \mid S_0=s),
\end{align}
which, under the $\gamma$-discounted sum and these assumptions, is an unbiased Monte-Carlo estimate of a lower-bound on the cumulative potential empowerment objective:
\[
\E_{\pip}\!\left[\sum_{t\ge0}\gamma^t
    \E_{z\sim p}\!\left[V_z^{\pi^{(n)},q^{(n)}}(s_t;s_t)\right]\right]
    \leq \E_{\pip}\left[\sum_t \gamma^t \gEp(s_t)\right].
\]

\begin{proposition}[A fixed classifier reward satisfies standard \gls{mdp} conditions]
\label{prop:fixed_classifier_mdp}
Fix a classifier $q(z\mid s_0,s_+)$ and define the normalized intrinsic reward
\[
    r_q(s_0,z,s')\triangleq (1-\gamma)\log\frac{q(z\mid s_0,s')}{p(z)}.
\]
For any policy $\pi$, define
\[
Q_z^{\pi,q}(s,a;s_0)
\triangleq
\E_{\pi(\dmy{a}\mid \dmy{s}, z; s_0)}\left[\sum_{k\ge 0}\gamma^k r_q(s_0,z,s_{t+k+1})\;\middle|\; s_t=s,a_t=a\right].
\]
Then $Q_z^{\pi,q}$ satisfies the Bellman equation
\begin{align*}
Q_z^{\pi,q}(s,a;s_0)
&= (1-\gamma)\,\E_{p(\dmy{s'}\mid s,a)}\left[\log\frac{q(z\mid s_0,s')}{p(z)}\right] \\
&\quad + \gamma\,\E_{p(\dmy{s'}\mid s,a)}\left[V_z^{\pi,q}(s';s_0)\right],
\end{align*}
where $V_z^{\pi,q}(s;s_0)=\E_{\pi(\dmy{a}\mid s,z;s_0)}Q_z^{\pi,q}(s,a;s_0)$.
\end{proposition}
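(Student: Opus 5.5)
The Bellman equation follows by splitting off the $k=0$ term of the discounted sum and applying the tower property together with the Markov property of the skill-conditioned process. Throughout, $s_0$ and $z$ are fixed. The policy $\pi(\cdot\mid s,z;s_0)$ is then an ordinary stationary Markov policy on $\cS$, and $r_q(s_0,z,s')$ is a fixed function of the next state only. Under these conditions the process $(s_t,a_t)$ is a standard controlled Markov chain with a state-dependent reward, which is exactly the setting in which the classical evaluation identities of \citet{puterman2014markov} apply.

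First, I will check that $Q_z^{\pi,q}$ is well defined, so that the infinite sum can be manipulated freely. The cleanest route is to assume, as is implicit in the statement, that $\log q(z\mid s_0,s')/p(z)$ is bounded. This holds, for example, if $q$ is bounded away from $0$ on the relevant support, or, for a softmax/InfoNCE classifier over a bounded logit $f_\phi$, because $\log q/p = f_\phi - c_\phi$ is bounded by twice the logit bound. With a bounded reward, the series $\sum_k\gamma^k r_q$ converges absolutely since $\gamma<1$. Fubini/dominated convergence then lets me exchange the expectation with the sum. If boundedness is unavailable, I will instead assume integrability of $\sum_k\gamma^k|r_q|$, which serves the same purpose.

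Second, I will write
\[
\sum_{k\ge0}\gamma^k r_q(s_0,z,s_{t+k+1}) = r_q(s_0,z,s_{t+1}) + \gamma\sum_{k\ge0}\gamma^{k} r_q(s_0,z,s_{t+1+k+1})
\]
and take the conditional expectation given $s_t=s,a_t=a$. The first term contributes $(1-\gamma)\E_{p(s'\mid s,a)}[\log q(z\mid s_0,s')/p(z)]$ directly. For the second term, I will condition on $s_{t+1}=s'$ and use the tower property. By the Markov property of the dynamics and the stationarity of $\pi$ given $(z,s_0)$, the conditional law of the future trajectory given $(s_t,a_t,s_{t+1})$ depends only on $s_{t+1}$. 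Averaging over $a_{t+1}\sim\pi(\cdot\mid s',z;s_0)$ then gives $\E_{a'}[Q_z^{\pi,q}(s',a';s_0)]=V_z^{\pi,q}(s';s_0)$.

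The main obstacle is this Markov/stationarity step. Stationarity holds because $z$ and $s_0$ are held fixed and the policy depends only on $(s,z;s_0)$, not on $t$ or on the history. For the recurrent Crafter setting, I would note that the same argument goes through on the augmented state that includes the recurrent memory. This completes the Bellman identity. As a remark, the identity makes $Q_z^{\pi,q}$ the fixed point of a $\gamma$-contraction, which is what licenses the greedy improvement used in \cref{thm:policy_improvement}.
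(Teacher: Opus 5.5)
Your proposal is correct and takes essentially the same route as the paper: split off the $k=0$ term of the discounted sum and use that, with $(z,s_0)$ held fixed, $r_q$ is a stationary reward for an ordinary MDP, so the remaining sum becomes $\gamma\,\E_{p(s'\mid s,a)}[V_z^{\pi,q}(s';s_0)]$. The paper's proof is only a two-line sketch of this argument; your integrability assumption (bounded or absolutely summable $\log q/p$) and your explicit tower/Markov-property step fill in details the paper leaves implicit.
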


\begin{proof}
    Re-express the discounted sum in terms of the reward and remaining summation.
    The classifier is fixed, so $r_q$ is a stationary reward for the \gls{mdp} conditioned on skill and start-state $(z,s_0)$.
\end{proof}

\begin{proposition}[On-policy classifier gives tight lower bound on \gls{mi}]
\label{prop:ba_bound_tight}
For any policy $\pi$ and any distribution $q(z\mid s_0,s_+)$,
\begin{align*}
    \cL(\pi,q)
    &\triangleq \E_{p(\dmy{z})\,p^\pi_z(\dmy{s_+}\mid s_0)}
    \left[\log\frac{q(z\mid s_0,s_+)}{p(z)}\right] \\
    &\le I^\pi(Z;S_+\mid S_0=s_0).
\end{align*}
Equality holds if $q(z\mid s_0,s_+)=p^\pi(z\mid s_0,s_+)$.
\end{proposition}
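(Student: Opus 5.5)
The plan is to prove \cref{prop:ba_bound_tight} by the standard Barber--Agakov argument: write the gap between the mutual information and the variational objective as an expected KL divergence, and then use nonnegativity of KL. Throughout, fix the start state $s_0$ and the policy $\pi$. The joint distribution over $(Z,S_+)$ given $S_0=s_0$ is $p(z)\,p^\pi(s_+\mid s_0,z)$, using the \gls{dsom} of \cref{eq:dsom}. Its $S_+$-marginal is $p^\pi(s_+\mid s_0)$ from \cref{eq:dsom_marginal}, and its posterior is
\[
p^\pi(z\mid s_0,s_+) = \frac{p(z)\,p^\pi(s_+\mid s_0,z)}{p^\pi(s_+\mid s_0)}.
\]
The conditional mutual information can then be written as
\[
I^\pi(Z;S_+\mid S_0=s_0)=\E_{p(z)\,p^\pi(s_+\mid s_0,z)}\left[\log\frac{p^\pi(z\mid s_0,s_+)}{p(z)}\right].
\]

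\textbf{Gap decomposition.} Subtract $\cL(\pi,q)$ from this expression. The $\log p(z)$ terms cancel, leaving
\[
\E\left[\log\frac{p^\pi(z\mid s_0,s_+)}{q(z\mid s_0,s_+)}\right].
\]
Now re-factor the joint as $p^\pi(s_+\mid s_0)\,p^\pi(z\mid s_0,s_+)$ by Bayes' rule. The gap becomes
\[
\E_{p^\pi(s_+\mid s_0)}\left[D_{\mathrm{KL}}\bigl(p^\pi(\cdot\mid s_0,s_+)\,\big\|\,q(\cdot\mid s_0,s_+)\bigr)\right]\ \ge 0 .
\]
Gibbs' inequality gives the nonnegativity. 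When $q=p^\pi$, each KL term vanishes, so equality holds.

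\textbf{Technical care.} The main obstacle is measure-theoretic rather than conceptual. In continuous state spaces the \gls{dsom} is a density, and we have to make sure the expectations are well defined. If $\cL(\pi,q)=-\infty$, the inequality holds trivially. Otherwise we need $q(\cdot\mid s_0,s_+)>0$ wherever the posterior is positive, almost surely. If the mutual information is infinite, the bound holds trivially. If it is finite, splitting the expectation of the difference into a difference of expectations is legitimate because both terms are integrable. In the discrete case, or when densities exist with respect to a common base measure, the manipulation above is then rigorous. With that, the proof is complete.
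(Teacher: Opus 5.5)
Your proposal is correct and takes the same route as the paper, whose proof is just ``Follows from B-A bound'': you spell that bound out by writing $I^\pi(Z;S_+\mid S_0=s_0)-\cL(\pi,q)$ as the expected KL divergence between the on-policy posterior $p^\pi(z\mid s_0,s_+)$ and $q(z\mid s_0,s_+)$, which is nonnegative and vanishes when $q=p^\pi$. Your integrability remarks go beyond what the paper provides, and they are sound.
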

\begin{proof}
    Follows from B-A bound.

\end{proof}
\begin{proof}[Proof of \cref{thm:policy_improvement}]
At iteration $n$, fit the classifier $q^{(n)}$ to the on-policy posterior:
\[
    \cL(\pi^{(n)},q^{(n)})=I^{\pi^{(n)}}(Z;S_+\mid S_0=s_0).
\]
Now hold $q^{(n)}$ fixed. By \cref{prop:fixed_classifier_mdp}, this is an ordinary \gls{mdp} with stationary intrinsic reward. A greedy policy-improvement step therefore gives
\[
    \cL(\pi^{(n+1)},q^{(n)})\ge \cL(\pi^{(n)},q^{(n)}).
\]
Then, refit the classifier to the new on-policy posterior. By \cref{prop:ba_bound_tight}, the refitted classifier tightens the lower bound which can only increase the \gls{mi} lower bound value:
\begin{align*}
I^{\pi^{(n+1)}}(Z;S_+\mid S_0=s_0)
&= \cL(\pi^{(n+1)},q^{(n+1)}) \\
&\ge \cL(\pi^{(n+1)},q^{(n)}) \\
&\ge \cL(\pi^{(n)},q^{(n)}) \\
&= I^{\pi^{(n)}}(Z;S_+\mid S_0=s_0).
\end{align*}
Thus exact on-policy classifier fitting plus greedy improvement monotonically improves the true \gls{mi}.
\end{proof}

Importantly, this theoretical result assumes an exact classifier and greedy policy updates at fixed $s_0$, which does not happen in practice with function approximation.

\printglossaries

\end{document}